  \providecommand\BibTeX{{%
    \normalfont B\kern-0.5em{\scshape i\kern-0.25em b}\kern-0.8em\TeX}}}
\newcommand{\filename}{ccs-template}
\def\compileFigures{0}
\newcounter{figureNumber}
\def\compileFprTprFigures{0}
\newcounter{fprTprFigureNumber}
\def\compileScatterFigures{0}
\newcounter{scatterFigureNumber}
\def\eqref#1{equation~\ref{#1}}
\def\1{\bm{1}}
\DeclareMathAlphabet{\mathsfit}{\encodingdefault}{\sfdefault}{m}{sl}
\SetMathAlphabet{\mathsfit}{bold}{\encodingdefault}{\sfdefault}{bx}{n}
\pgfplotsset{compat=1.3}
\renewcommand\footnotetextcopyrightpermission[1]{} 
\begin{document}
	
\title{Enhanced Membership Inference Attacks against Machine Learning Models} 

\author{Jiayuan Ye}
\affiliation{%
  \institution{National University of Singapore}
  \country{}
  }
\email{jiayuan@comp.nus.edu.sg}

\author{Aadyaa Maddi}
\affiliation{%
  \institution{National University of Singapore}
  \country{}
}
\email{aadyaa.maddi@gmail.com}

\author{Sasi Kumar Murakonda}
\affiliation{%
 \institution{Privitar Labs}
 \country{}
}
\email{sasi.murakonda@privitar.com}

\author{Vincent Bindschaedler}
\affiliation{%
  \institution{University of Florida}
  \country{}
  }
\email{vbindsch@cise.ufl.edu}

\author{Reza Shokri}
\affiliation{
  \institution{National University of Singapore}
  \country{}
  }
\email{reza@comp.nus.edu.sg}

\begin{abstract}
How much does a machine learning algorithm leak about its training data, and why?  Membership inference attacks are used as an auditing tool to quantify this leakage.  In this paper, we present a comprehensive \textit{hypothesis testing framework} that enables us not only to formally express the prior work in a consistent way, but also to design new membership inference attacks that use reference models to achieve a significantly higher power (true positive rate) for any (false positive rate) error. More importantly, we explain \textit{why} different attacks perform differently. We present a template for indistinguishability games, and provide an interpretation of attack success rate across different instances of the game. We discuss various uncertainties of attackers that arise from the formulation of the problem, and show how our approach tries to minimize the attack uncertainty to the one bit secret about the presence or absence of a data point in the training set. We perform a \textit{differential analysis} between all types of attacks, explain the gap between them, and show what causes data points to be vulnerable to an attack (as the reasons vary due to different granularities of memorization, from overfitting to conditional memorization).  Our auditing framework is openly accessible as part of the \textit{Privacy Meter} software tool.~\footnote{\url{https://github.com/privacytrustlab/ml_privacy_meter/tree/master/research/2022_enhanced_mia}}

\end{abstract}

\maketitle
\pagestyle{plain} 



\section{Introduction}
\label{sec:intro}

Machine learning algorithms are under increasing scrutiny from regulatory authorities, due to their usage of large amount of sensitive data.  In particular, vulnerability to membership inference attacks~\citep{homer2008resolving, shokri2017membership} --- which allow determining if a specific data instance was part of the model's training set --- is highlighted as a potential confidentiality violation and privacy threat to training data by organizations such as the ICO (UK) and NIST (US)~\citep{murakonda2020ml}. Also, there is a tight connection between the underlying notion of privacy risk in membership inference and differential privacy~\citep{dwork2006calibrating, yeom2018privacy, jagielski2020auditing, nasr2021adversary}. Therefore, using membership inference analysis for privacy auditing and data protection impact assessment (DPIA) in machine learning systems is gaining traction. However, despite the plethora of membership inference attacks in the literature, and regardless of their reported empirical performance, the attacks and their associated risk measurement approaches are fundamentally limited.

\textit{Lack of consistency/comparability: }
The focus in the literature is in celebrating the empirically evaluated success of new attacks, even though different attacks could be incomparable as the meaning of their success rates is dependent on specific assumptions about the adversary's uncertainty about the target, instead of (or in addition to) the inherent leakage of the model.  As we show in Section~\ref{ssec:summary_attack_explain}, attacks that are seemingly similar (e.g., they all test the loss or output of model on data) can be actually measuring \textit{different} notions of leakage. Also, oversimplifying the notions of leakage to average-case and worst-case leakage, without formally specifying the random experiments in the membership indistinguishability games, can lead to more confusion. Thus, it is crucial to formalize attacks and their uncertainty in a consistent framework to develop meaningful metrics for their comparisons (Section~\ref{ssec:attack_comparison} and \ref{ssec:why_vulnerable}).

\textit{Lack of explanations and guidance: }
Prior work give little formal explanation of why some attack strategies succeed in inferring membership of training data and why other strategies fail.  Therefore the prior work does not provide any guidance on how to design stronger attacks, so every new work needs to embark on a new journey of exploration and discovery.  Besides, by implicitly assuming that there is only one right way to run membership inference attacks, the interpretations of attack results are inadequate: One analysis concludes that leakage is only due to overfitting~\cite{yeom2018privacy}, and another analysis concludes that the true leakage is rather due to average memorization of hard points~\cite{carlini2022membership}. We show there are many simultaneous reasons for leakage which are separately captured by different attacks (Section~\ref{ssec:why_vulnerable}), and can be expressed in a coherent formal hypothesis testing framework.

\renewcommand{\i}{\textbf{\textsf{in}} }
\renewcommand{\o}{\textbf{\textsf{out}} }

\textbf{What is the essential problem?} Assume we want to train a model using a machine learning algorithm. Given any data point, we can partition the set of possible worlds into two categories: the \i worlds where the data point is a member of the training set, and the \o worlds in which it is not.  Now, we form a hypothetical game with an adversary.  We design a random experiment that first selects a data point $x$, then \textit{samples} a pair of \i and \o worlds, and finally places the adversary in one of them at random. We then challenge the adversary to tell the world he is in, exclusively based on data point~$x$ and the trained model~$\theta$.  The adversary runs a membership inference attack to infer the secret bit \i or \textbf{\textsf{out}}. We repeat the random experiment many times, and quantify the adversary's power (as the true positive rate of his attack), and adversary's error (as the false positive rate of his attack), where positive is associated with the \i world. We can imagine that different adversaries might vary based on how much error they can tolerate, but they all aim at designing membership inference attacks that would maximize the adversary's power. We define the \textit{leakage} of the algorithm as the power versus error curve which is computed on the result of our random experiments on all the adversaries.

\textbf{How should we interpret the leakage of an algorithm?} It all depends on how the random experiment of the indistinguishability game is set up, and which sources of randomness are fixed over the experiments. For example, we can assume that the target data~$x$ is fixed, and the training datasets in the two worlds differ only in~$x$. Then, the attack results would reflect the \textit{average leakage} of the the training algorithm about~$x$, where averaging is done over all other data points, and the randomness of the training algorithm. In a more restricted setting, we can also assume that the remaining data points in the training set are also fixed (and decided by the adversary). The attack results, in this setting, reflect the \textit{conditional leakage} of the training algorithm about~$x$, conditioned on the given remaining data points, but averaged over the randomness of the algorithm. In a more generic setting, we can assume the training data of \i and \o worlds differ in one data point~$x$, but all the records including~$x$ are freshly sampled in each experiment. The attack results, in this setting, reflect the \textit{average leakage} of the training algorithm, averaged over all possible data points, and the randomness of the training algorithm. We can use the game template to construct all different types of experiments. However, we emphasize that, even though they all measure some type of leakage, the meaning of leakage is different in each type of random experiment. 

\textbf{Our core contributions are designing the strongest membership inference attacks against machine learning models, based on an explainable theoretical framework, and providing an accurate interpretation of their empirical results}. At the heart of our framework is a template for membership indistinguishability games, a binary hypothesis testing formulation of the problem, and attack methods based on likelihood ratio tests. Our framework al enables us to formally express the prior work, and compare them all in a consistent way. We compare attacks across the whole power (TPR) vs. error (FPR) curve, and highlight their performance at low FPR and high TPR regions. 

We formalize the shadow-based membership inference attacks~\cite{shokri2017membership}, and introduce an enhanced (population-based) attack that achieves a similar (or better) power for any given error, but with a \textit{significantly lower computation cost} (without requiring to train any shadow models). We also derive an advanced reference model-based attack that outperforms prior work in the same class (of using reference models)~\cite{carlini2022membership} in all metrics, as shown in Figure~\ref{fig:related_work_carlini}. Finally, we derive a novel attack based on self-distillation, which uses a model-dependent and sample-dependent strategy to further reduce the uncertainty of the attacker. By design, this attack tries to eliminate the attacker's uncertainty about the training data of the target model, which enables the attack to have a higher AUC score and a better performance (lower FPR) at a high TPR in Section~\ref{ssec:evaluation_attack_performance}. 

Our framework is an explanatory tool that allows consistent comparison between different designed attacks. For example, the widely-used method of computing the threshold from shadow models results in an attack that aims to optimize the success of a fixed attack in detecting that a randomly-selected data point is a member of the training set of a randomly-selected model. This is why the performance of such attacks --- which is largely sensitive to the signals applying to most members of most models --- has a strong relationship to overfitting (as shown in~\citep{shokri2017membership,yeom2018privacy}). 

Because our framework enables designing attacks of increasing strength, we perform a \textbf{differential analysis} and investigate why some data points are more vulnerable than others. Given some target models, we identify sets of data points that are vulnerable to some attacks but not to others (e.g., points detected by our reference model attack but not the population-based attack). Using these differentially vulnerable sets, we are able to identify the most vulnerable data records, in a much more precise way compared to the prior work. In Figure~\ref{fig:purchase100_hard_example_fpr_vs_tpr_plot}, we show that if we focus on such points, the AUC of the strongest attack is as high as $0.984$. We also explain the differential vulnerability through the loss landscape. For example, using reference-model attacks we find that the loss distribution of the top vulnerable points is more concentrated and has a larger mean than the overall loss distribution of the other data points. We also investigate how the vulnerability of some points is influenced by their nearest neighbors (defined in our case by the cosine distance of the output of the second to last layer). These experimental insights suggest new directions for the design of improved attack strategies.


\section{Related Work}
\label{sec:related-work}

\paragraph{Privacy Risk Analysis with Membership Inference Attacks}
\cite{homer2008resolving} performed the first membership inference attack on genome data to identify the presence of an individual's genome in a mixture of genomes. \cite{sankararaman2009genomic, backes2016membership} provided a formal analysis of this risk of detecting the presence of an individual from aggregate statistics computed on independent (binary or continuous) attributes. \cite{murakonda2021quantifying} extended this analysis to the case of releasing discrete Bayesian networks learned from data with dependent attributes. \cite{dwork2015robust} provide a more extensive analysis when the released statistics are noisy and the attacker has only one reference sample to perform the attack. These works establish the privacy risk of releasing aggregate statistics by quantifying the success of membership inference attacks as a function of the number of statistics released and individuals in the dataset. We refer the reader to \cite{dwork2017exposed} for a survey. 

\paragraph{Differential Privacy and Membership Inference:}
The definitions of differential privacy~\citep{dwork2006calibrating} and membership inference~\citep{homer2008resolving,dwork2015robust,shokri2017membership} are very closely connected. 
By definition, differentially private algorithms bound the success of membership inference attacks for distinguishing between two neighboring datasets. Multiple works~\citep{yeom2018privacy,erlingsson2019we,humphries2020differentially,thudi2022bounding}, each improving on the previous work, have provided upper bounds on the \textit{average} success of membership inference attacks over \textit{general} targets as a function of the parameters in differential privacy. \cite{jayaraman2019evaluating,rahman2018membership} evaluated the performance of membership inference attacks on machine learning models trained with differentially private algorithms. Moreover, the empirical performance of membership inference attacks has also been used to provide lower bounds on the privacy guarantees achieved by various differentially private algorithms~\citep{jagielski2020auditing,nasr2021adversary,malek2021antipodes}. These works call for stronger membership inference attacks that could measure the leakage through the model about a particular point of interest, which is what we are trying to do in this paper.

\paragraph{Membership Inference Attack for Machine Learning Models}
\cite{shokri2017membership} demonstrated the vulnerability of machine learning models to membership inference attacks in the black-box setting, where the adversary has only query access to the target model. The attack algorithm is based on the concept of shadow models, which are models trained on some attacker dataset that is similar to that of the training data. A substantial body of literature followed this work extending the shadow model attacks to different setting such as white box analysis~\citep{nasr2019comprehensive, sablayrolles2019white, leino2020stolen}, label-only access~\citep{li2021membership,choquette2021label}, and federated learning~\citep{nasr2019comprehensive, melis2019exploiting}. However,
for construction and evaluation of the attack strategy, most previous works follow the membership inference game formulation in \cite{yeom2018privacy}, which focus attacking \textit{average} member and non-member data records of a target model. 
Such an average-case formulation does not support reasoning about the privacy risk of individual data records. As a result, the attacks optimized under this framework, despite chasing higher and higher \textit{average accuracy}, are still failing to precisely capture the privacy risk of worst-case data records in DP auditing~\cite{jagielski2020auditing,nasr2021adversary}. On the contrary, we formalize new membership inference attack by reasoning about the \textit{worst-case performance} of attacks over specific targets, thus identifying records with higher vulnerabilities (Figure~\ref{fig:purchase100_hard_example_fpr_vs_tpr_plot}).

\paragraph{Beyond Average Performance of Membership Inference Attacks.} To understand the privacy risk on \textit{worst-case} training data, many recent works restrict the targets to a (heuristically-selected) subset of records~\cite{long2020pragmatic,carlini2019secret} or poisoned neighboring datasets~\cite{jagielski2020auditing,nasr2021adversary,tramer2022truth}, and observe significantly higher attack success rates. Partially motivated by these observations, and by prior works~\cite{murakonda2021quantifying} that computes likelihood ratio to launch powerful membership inference attacks, recent works~\cite{carlini2022membership,watson2021importance} design per-example attacks that perform \textit{better} with \textit{higher confidence} (e.g. \citet{carlini2022membership} argue the importance of low FPR), by studying the ``hardness'' of each data record and performing (parametric) difficulty calibration. These attacks are very similar in design to the Attack R in our paper, in the sense that each target data is considered separately (rather than aggregated in average) in constructing the attack. However, these works offer limited formal understanding about: what are the intrinsic properties of these worst-case ``hard'' records, and why are their proposed attacks better at identifying them. By contrast, we investigate \textit{why} different attacks identify different vulnerable examples, and how to \textit{understand} the vulnerabilities of records identified by attacks with different reduced uncertainties. We also show that our systematic efforts result in attacks that are more powerful than~\cite{carlini2022membership} (higher AUC scores with comparable TPR at small FPR) in Section~\ref{ssec:comparison_with_carlini}.

\section{Attack Framework}

\label{sec:framework}

Our objective is to design a framework that enables auditing the privacy loss of a machine learning models about \textit{a particular record}, in the \textit{black-box} setting (where only model outputs ---and not their parameters or internal computations--- are observable). This framework has three elements: (i) the inference game as the evaluation setup; (ii) the indistinguishability metric to measure the privacy risk, and (iii) the construction of membership inference attack as hypothesis testing. The notion of privacy underlying our framework is primarily based on differential privacy, and multiple pieces of this framework are generalizations of existing inference attacks against machine learning algorithms. We present the important design choices for the game while constructing and evaluating membership inference attacks, for the purpose of having more precise privacy auditing for different kinds of privacy loss. 

\subsection{Inference Game}

We quantify different types of privacy loss for training machine learning models in multiple hypothetical \textbf{inference games} between a challenger and an adversary. We first introduce a most general inference game that captures the average privacy loss of random models (trained on random subsets of a population data pool) about its (whole) training dataset, as follows.
\begin{definition}[Membership inference game for average model and record]
    \label{def:game_random}
    Let $\pi$ be the underlying pool of population data, and let $\mathcal{T}$ be the training algorithm of interest. The game between a challenger and an adversary proceeds as follows.
    \begin{enumerate}
    \item The challenger samples a dataset $D\xleftarrow{s_{D}}\pi^n$ using a \textbf{fresh} random seed $s_{D}$, and trains a model $\theta\xleftarrow{s_{\theta}}\mathcal{T}(D)$ on $D$ by using a \textbf{fresh} random seed $s_{\theta}$ in the algorithm $\mathcal{T}$.
    \item The challenger samples a data record $z_0\xleftarrow{s_{z_0}}\pi$ using a \textbf{fresh} random seed $s_{z_0}$. Note that here $z_0\notin D$ with high probability when the population data pool is large enough.
    \item The challenger samples a data record $z_1\xleftarrow{s_{z_1}} D$ using a \textbf{fresh} random seed $s_{z_1}$.
    \item The challenger flips a random unbiased coin $b\xleftarrow{R}\{0,1\}$, and sends the target model and target record $\theta,z_b$ to the adversary.
    \item The adversary gets access to the data population pool $\pi$ and access to the target model, and outputs a bit $\hat{b}\leftarrow \mathcal{A}(\theta,z_b)$.
    \item If $\hat{b}=b$, output $1$ (success). Otherwise, output $0$.
\end{enumerate}
\end{definition}

We compute the performance of the attack, by averaging it over many repetitions of this random experiment. This game is similar to the games in prior works~\citet{sablayrolles2019white,yeom2018privacy,carlini2022membership}, in the sense that both the target model and the target record are randomly generated. However, this also limits the type of leakage that this game captures, as it is averaged over multiple target models and data records. In the rest of this section, we introduce different variants of this game (in Definition~\ref{def:game_model}, \ref{def:game_record}, \ref{def:game_worstcase}) that capture the privacy loss of (a specific) target model about (a fixed) target data record.

\begin{definition}[Membership inference game for a \textbf{fixed} model]
    \label{def:game_model}
    \ 
    \begin{enumerate}
        \item The challenger samples a dataset $D\xleftarrow{s_{D}}\pi^n$ via a \textbf{fixed} random seed $s_{D}$, and trains a model $\theta\xleftarrow{s_{\theta}}\mathcal{T}(D)$ on the $D$ by using a \textbf{fixed} random seed $s_{\theta}$ in the algorithm $\mathcal{T}$.
        \item The challenger samples a data record $z_0\xleftarrow{s_{z_0}}\pi$ via a \textbf{fresh} random seed $s_{z_0}$. Note that here $z_0\notin D$ with high probability when the population data pool is large enough.
        \item The challenger samples a data record $z_1\xleftarrow{s_{z_1}} D$ via a \textbf{fresh} random seed $s_{z_1}$.
        \item Remaining steps are the same as (4) to (6) in Definition~\ref{def:game_random}.
    \end{enumerate}
\end{definition}

Note that this game is similar to the game for average model and record (Definition~\ref{def:game_random}), except that in step (1) the random seeds $s_{D}, s_{\theta}$ are \textbf{fixed}, such that the challenger always selects the same target dataset and target model across multiple trials of the game. Therefore, Definition~\ref{def:game_model} quantifies the privacy loss of a specific model trained on a \textbf{fixed} dataset. Similar games are widely used in practical MIA evaluations~\citep{shokri2017membership,nasr2019comprehensive,watson2021importance,salem2019ml} for auditing the privacy loss of a released model in machine-learning-as-a-service setting.

\begin{definition}[Membership inference game for a \textbf{fixed} record]\
    \label{def:game_record}
    \ 
    \begin{enumerate}
        \item The challenger samples a dataset $D\xleftarrow{s_{D}}\pi^n$ via a \textbf{fresh} random seed $s_{D}$, and trains a model $\theta_0\xleftarrow{s_{\theta_0}}\mathcal{T}(D)$ on $D$ by using a \textbf{fresh} random seed $s_{\theta_0}$ in the algorithm $\mathcal{T}$.
        \item The challenger samples a data record $z\xleftarrow{s_{z}}\pi$ via a \textbf{fixed} random seed $s_{z}$. Note that here $z\notin D$ with high probability when the population data pool is large enough.
        \item The challenger trains a model $\theta_1\xleftarrow{s_{\theta_1}} \mathcal{T}(D \cup \{z\})$ by using a \textbf{fresh} random seed $s_{\theta_1}$ in the algorithm $\mathcal{T}$.
        \item The challenger flips a random unbiased coin $b\xleftarrow{R}\{0,1\}$, and sends the target model and target record $\theta_b,z$ to the adversary.
    \item The adversary gets access to the data population pool $\pi$ and access to the target model, and outputs a bit $\hat{b}\leftarrow \mathcal{A}(\theta_b,z)$.
    \item If $\hat{b}=b$, output $1$ (success). Otherwise, output $0$.
    \end{enumerate}
\end{definition}

There are two differences between this game and the Definition~\ref{def:game_random} game. Firstly, the construction of target model and target record in step (3) is different. Secondly, in step (2) the random seed $s_{z}$ is \textbf{fixed} such that the challenger always selects the same target record across multiple trials of the game. In essence, the adversary is distinguishing between models trained with and without a particular record (while the remaining dataset $D$ is randomly sampled from population), and therefore tries to exploit the privacy loss of a \textbf{fixed} specific record. Similar inference games that only target (a) specific record(s) are used in previous works for pragmatic, high-precision membership inference attacks on a subset of vulnerable records~\cite{long2018understanding,long2020pragmatic}, and for estimating privacy risks of data records in different subgroups~\cite{chang2021privacy}.

\begin{definition}[Membership Inference Game for \textbf{fixed} worst-case record and dataset]
    \label{def:game_worstcase}
    \ 
    \begin{enumerate}
        \item The challenger samples a dataset $D\xleftarrow{s_{D}}\pi^n$ via a \textbf{fixed} random seed $s_{D}$, and trains a model $\theta_0\xleftarrow{s_{\theta_0}}\mathcal{T}(D)$ on $D$ by using a \textbf{fresh} random seed $s_{\theta_0}$ in algorithm $\mathcal{T}$.
        \item The challenger samples a data record $z\xleftarrow{s_{z}}\pi$ via a \textbf{fixed} random seed $s_{z}$.
        \item The challenger trains a model $\theta_1\xleftarrow{s_{\theta_1}} D\cup \{z\}$ by using a \textbf{fresh} random seed $s_{\theta_1}$ in algorithm $\mathcal{T}$.
        \item Remaining steps are the same as (4) to (6) in Definition~\ref{def:game_record}.
    \end{enumerate}
\end{definition}

Note that this game is similar to the game in Definition~\ref{def:game_record} for a \textbf{fixed} record except that in step (1) the random seed $s_{D}$ is also \textbf{fixed} such that the challenger always selects the same target dataset across multiple trials of the game. Therefore, the game Definition~\ref{def:game_worstcase} quantifies the privacy loss of a specific record with regard to a \textbf{fixed} dataset. When the record and dataset are fixed to be worst-case (crafted), this game closely resembles the type of worst-case leakage in differential privacy definition. Therefore, this game (Definition~\ref{def:game_worstcase}) is widely used for auditing differentially private learning algorithms~\cite{jagielski2020auditing,nasr2021adversary,tramer2022debugging}. 

\subsection{Indistinguishability Metric}

We use an \textbf{indistinguishability} measure (which is the basis of differential privacy) to define privacy of individual training data of a model. That is, the true leakage of a model $\theta$ (trained on private dataset $D$) about a target data $z_b$, is the degree to which a new model $\theta'$ trained on the same dataset excluding $z_b$ is distinguishable from $\theta$.  According to this measure, the \textit{privacy loss} of the model with respect to its training data is the adversary's success in distinguishing between the two possibilities ${b = 0} \text{ vs } {b = 1}$ over multiple repetitions of the inference game.  Naturally, the inference attack is a \textbf{hypothesis test}, and adversary's error is composed of the false positive (i.e., inferring a non-member as member) and false negative of the test.  In practice, the error of the adversary in each round of the inference game depends on multiple factors:

\begin{itemize}[leftmargin=1.25em]
	\item The \textit{true leakage of the model} (trained on a private dataset via a training algorithm) about the target data $z_b$ when $b=1$, i.e. when the private dataset contains $z_b$. As the true leakage becomes higher (or lower), the attack success rate on the corresponding model and data would increase (or decrease).
		
	\item The \textit{uncertainty} (belief or background knowledge) of attack algorithm \textit{about the population data} (where the target model's training data is sampled from). The more precise this background knowledge is, the higher the attacker's success rate will be. Note that this has nothing to do with the leakage due to releasing trained model, yet it is measured by attacks.
	
	\item The adversary's \textit{uncertainty about the training algorithm}~$\mathcal{T}$. For example an attack designed against training with cross-entropy loss may perform poorly on models trained with 0-1 loss.
	
	\item The \textit{dependency of the attack construction process} on the \textit{specific target} data $(x_b, y_b)$, and target model $\theta$. If an attacker uses the same strategy across all targets, it is very likely to fail on atypical targets, thus having suboptimal performance. This is because the attack ignores individual properties of targets.
	
	\item The attacker's \textit{uncertainty about all training data} (for a target model), except whether the target data $(x_b, y_b)$ is used. In the most extreme case, an adversary that exactly knows all the remaining training records, could perform the leave-one-out attack, which is in principle the most powerful attack (Section~\ref{subsec:loo_explain}).
	
\end{itemize}

\newcommand{\txtbullet}{\quad \textbullet\,\,}

In the ideal setting, we want the attack error  to be only dependent on the true leakage of the model about the target data (i.e., whether the same model trained with and without $(x_b, y_b)$ are distinguishable from each other).  To measure this leakage accurately, and to cancel out the effect of other uncertainties and factors, the evaluation setup for the inference game need to be designed based on the following principle: the population data used for constructing the attack algorithm, and evaluating the inference game, need to be similar, in distribution, to the training data. This is to minimize the impact of prior belief about target data in the performance of the inference attack. 
If this principle is violated, we might overestimate the privacy loss (by making the attack accuracy dependent on a distinct prior knowledge, e.g. all members are red and all non-members are blue) or underestimate the privacy loss (by evaluating the inference attack on a population data distribution for which it was not constructed).

Another crucial requirement is that the privacy audit needs to output a detailed report, which captures the uncertainty of the attack. Reporting just the accuracy of the attack, as in most of the literature, is not an informative report (as it does not necessarily capture the true leakage). Given the attack being a hypothesis test, the audit report needs to include the analysis of the error versus power of the test: if we can tolerate a certain level of false positive rate in the inference attack, how much would be the true positive rate of the attack, over the random samples from member and non-member data? The area under the curve for such an analysis reflects the chance that the membership of a random data point from the population or the training set can be inferred correctly.

\section{Constructing Black-box Membership Inference Attacks}
\label{section3}

In the membership inference game, the adversary can observe the output of a target machine learning model~$\theta$ (trained on unknown private dataset~$D$) and a precise target data point $z$ as input, and is expected to output $0$~or~$1$ to guess whether the sample $z$ is in the dataset $D$ or not. 
In this section, we construct and evaluate attacks under the following assumptions.
\begin{enumerate}[leftmargin=2em]
    \item \textit{Black-box access:} the adversary could only access the output function of the target model instead of white-box parameters.
    \item \textit{Knowledge of population data:} the adversary can sample from the underlying pool of population data. 
    \item \textit{Knowledge of training algorithm:} the adversary can sample from the distribution of trained model $\theta$ on any input dataset $D$ via the given randomized training algorithm $\mathcal{T}$. 
\end{enumerate}

In terms of signal function for the black-box membership inference attack, in this paper, we solely consider loss values. This is because, under binary hypothesis test formulation of membership inference attack game on \textit{general} targets (Definition~\ref{def:game_random}), we observe that the most powerful criterion for choosing among hypotheses -- the likelihood ratio test (LRT), is \textit{approximately} comparing the loss of target to a constant threshold (under certain assumptions and approximation steps).

\begin{lemma}[Approximated LRT for Membership Inference on General Targets]
    \label{lem:LRT}
    Let $(\theta, z)$ be random samples from the joint distribution of target model and target data point, specified by one of the following membership hypotheses.
    \begin{align}
        H_0:\ & D\xleftarrow{n\ i.i.d. samples}\pi, \theta\xleftarrow{sample}\mathcal{T}(D), z\xleftarrow{sample} \pi\label{eqn:LRT_hyposis}\\
        H_1:\ & D\xleftarrow{n\ i.i.d. samples}\pi, \theta\xleftarrow{sample}\mathcal{T}(D), z\xleftarrow{sample} D\label{eqn:alternative_hyposis}
    \end{align}
    Then under certain assumptions (in Appendix A) the Likelihood Ratio Test (LRT) \textbf{approximately} equals
    \begin{equation}
        \label{eqn:lrt_strategy}
        \text{If } \ell(\theta,x_z,y_z)\leq c, \text{ reject }H_0,
    \end{equation}
    where $c$ could be an arbitrarily constant threshold. 
\end{lemma}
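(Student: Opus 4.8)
The plan is to write the likelihood ratio test statistic for the two hypotheses explicitly and then simplify it using a posterior-distribution assumption on the training algorithm $\mathcal{T}$ together with a large-$n$ approximation. Since $z$ is drawn independently of $\theta$ under $H_0$ but is one of the training records under $H_1$, I would first argue (using the i.i.d.\ symmetry of the population sampling) that the joint densities factor as
\begin{align}
p_0(\theta, z) &= \pi(z)\, \mathbb{E}_{D \sim \pi^n}\!\left[\mathcal{T}(\theta \mid D)\right],\\
p_1(\theta, z) &= \pi(z)\, \mathbb{E}_{D_- \sim \pi^{n-1}}\!\left[\mathcal{T}(\theta \mid D_- \cup \{z\})\right],
\end{align}
where $D_-$ denotes the $n-1$ remaining training records and $\mathcal{T}(\theta\mid D)$ is the density of the trained model. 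The population factor $\pi(z)$ appears in both and cancels, so the likelihood ratio collapses to $\Lambda(\theta,z) = p(\theta \mid z \in D)/p(\theta)$, the ratio of the density of $\theta$ when $z$ is forced into the training set to the marginal density of $\theta$.

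The key step is to assume (this is the content of Appendix~A) that the training algorithm behaves like a Gibbs posterior, $\mathcal{T}(\theta \mid D) = \frac{1}{Z(D)} \exp\!\big(-\sum_{z_i \in D} \ell(\theta, z_i)\big)$, so that the loss of each training record enters multiplicatively. Substituting this into the numerator and factoring out the term for $z$ gives
\begin{equation}
p(\theta \mid z \in D) = e^{-\ell(\theta, z)}\, \mathbb{E}_{D_-}\!\left[\frac{\prod_{z_i \in D_-} e^{-\ell(\theta, z_i)}}{Z(D_- \cup \{z\})}\right].
\end{equation}
I would then invoke two approximations valid for large $n$: first, that the partition function is stable under insertion of a single record, $Z(D_- \cup \{z\}) \approx Z(D_-)$, so that the remaining expectation equals $\mathbb{E}_{D_-}[\mathcal{T}(\theta \mid D_-)]$ and no longer depends on $z$; and second, that the marginal density of a model trained on $n-1$ records is essentially that of a model trained on $n$ records, $\mathbb{E}_{D_-}[\mathcal{T}(\theta \mid D_-)] \approx p(\theta)$. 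Together these yield $p(\theta \mid z \in D) \approx e^{-\ell(\theta, z)}\, p(\theta)$, hence $\Lambda(\theta, z) \approx e^{-\ell(\theta, z)}$.

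Finally, the Neyman--Pearson test rejects $H_0$ when $\Lambda(\theta, z) \geq \tau$; taking logarithms turns this into $\ell(\theta, z) \leq -\log \tau =: c$, which is exactly the thresholding rule~\eqref{eqn:lrt_strategy}, with $c$ tracing out the full power--error curve as $\tau$ varies. The main obstacle is justifying the two large-$n$ approximations: the cancellation of $\pi(z)$ is exact, but controlling the dependence on the specific record $z$ of both the partition function $Z(D_- \cup \{z\})$ and the $n$-versus-$(n-1)$ marginal is where the word ``approximately'' in the statement is genuinely being earned. Making this rigorous would require quantifying how much a single record perturbs the posterior normalization---precisely the sensitivity/stability estimate that the Gibbs-posterior model is chosen to render tractable.
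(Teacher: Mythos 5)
Your proposal follows essentially the same route as the paper's Appendix~A proof: the identical factorization of the two likelihoods with the common $\pi(z)$ cancelling, the same Gibbs-posterior assumption on $\mathcal{T}$, the same two large-$n$ approximations (stability of the partition function under inserting one record, and the $n$-versus-$(n-1)$ marginal), and the same Neyman--Pearson thresholding to arrive at $\ell(\theta,x_z,y_z)\leq c$. The only cosmetic differences are that the paper keeps a temperature constant $T$ (which merely rescales the threshold) and phrases the partition-function step as the exact one-sided inequality $P(\theta\mid D\cup z)\geq e^{-\ell(\theta,z)/T}P(\theta\mid D)$ rather than your approximate equality.
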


We provide a detailed proof in Appendix A. It should be noted that Lemma~\ref*{lem:LRT} is \textit{not} an optimality guarantee, because the assumptions may not hold in practical settings, and the approximations are only accurate when the dataset is large enough. Instead, Lemma~\ref{lem:LRT} is an explanation of why (under certain assumptions and approximations) the loss-threshold based attacks are powerful in principle, which is also the reason that we focus on loss-based attacks.~\footnote{This is consistent with~\citet{sablayrolles2019white}, which proves that loss-base attack coarsely approximates the Bayes-optimal strategy under similar assumptions.}

As discussed in Section~\ref{sec:framework}, attack success on general targets is \textit{not enough} for auditing more specific types of privacy loss, where we actually want attacks that succeed on a \textit{specific} target model or a \textit{worst-case} data record. To this end, the general attack \eqref{eqn:lrt_strategy} with constant threshold $c$ is \textit{overly general} and may have suboptimal performance, as it fails to capture the individual patterns for \textit{atypical targets}. This motivates us to design attacks with model-dependent and record-dependent thresholds, such that the thresholds are optimized under  more specific target-dependent inference games (Definition~\ref{def:game_model}, \ref{def:game_record} or~\ref{def:game_worstcase}). These target-dependent thresholds and inference games model \textit{reduced uncertainty} of the attacker about its target (which is reflected by the fixed random seeds), and therefore conceptually improve attack performance.

\paragraph{Our General Template for Attack Strategy.} Building on the approximated general LRT \eqref{eqn:lrt_strategy}, we derive the following new variant of instance-specific attack strategy.
\begin{equation}
    \label{eqn:general_strategy}
    \text{If } \ell(\theta,x_z,y_z)\leq c_{\alpha}(\theta,x_z,y_z), \text{ reject }H_0.
\end{equation}
Here $c_{\alpha}(\theta,x_z,y_z)$ is a threshold function (to be computed) that depends on the target model and (or) data record, and satisfies an arbitrary confidence requirement $0\leq \alpha\leq 1$. In this new strategy, the threshold function may take different values on different pairs of target model and data, and therefore gives the adversary more freedom to optimize its threshold for the more specific inference games Definition~\ref{def:game_model} (of a fixed model), Definition~\ref{def:game_record} (of a fixed record) or Definition~\ref{def:game_worstcase} (of a fixed pair of record and dataset). 

\begin{table}[h!]
    \centering
    \caption{Summary of threshold function used in Attacks S, P, R, D, their dependency on the target and corresponding inference games.}
    {\footnotesize\begin{tabular}{|c|c|c|c|}
    \hline
    \textbf{Attack} & \textbf{Threshold} & \textbf{Dependencies} & \textbf{Inference Game} \\
    \hline
    S & $c_{\alpha}(y_z)$ & Record Label & Definition~\ref{def:game_random} \\
    \hline
    P & $c_{\alpha}(\theta)$ & Model (dataset) & Definition~\ref{def:game_model}\\
    \hline
    R & $c_{\alpha}(x_z, y_z)$ & Record & Definition~\ref{def:game_record}\\
    \hline
    D & $c_{\alpha}(\theta, x_z, y_z)$ & Record and Model & Definition~\ref{def:game_worstcase} \\
    \hline
    \end{tabular}}
    \label{tab:attack_threshold_dependency_summary}
\end{table}

We construct a series of attacks whose thresholds have increasingly strong dependency on the target model and (or) target record, in Table~\ref{tab:attack_threshold_dependency_summary}. The stronger the dependency is, the smaller the out world (that contains targets generated by the challenger under $b=0$ in the inference game) is, thus reflecting reduced uncertainty of the adversary about its target and enhancing attack performance.
Among the four attacks, Attack S optimizes a fixed threshold over the largest out world (inference game~\ref{def:game_random} under $b=0$), and recovers the shadow model attack~\cite{shokri2017membership}. Attack R optimizes its threshold over a smaller target record-dependent out world (inference game~\ref{def:game_record} under $b=0$), and is similar in nature to previous attacks~\cite{long2020pragmatic,watson2021importance,carlini2022membership} against a subset of vulnerable records. We also construct a most specific Attack D, which optimizes its threshold over approximation of the smallest out world (inference game~\ref{def:game_worstcase} under $b=0$) that only contains a set of models trained on the same training data as the target model, excluding the specific target data. 

\paragraph{Our General Method for Determining Attack Threshold} Our goal is to compute an attack threshold that \textbf{(1)} has a specified \textit{dependency} on the target (model and data), i.e., encompasses a given uncertainty level; \textbf{(2)} enables \textit{confident} attack prediction that guarantees \textit{low false positive rate (FPR)}. To achieve these goals, we compute the threshold under an arbitrary FPR requirement $0\leq \alpha\leq 1$, by deriving the percentiles of distribution over loss value (or other signal function values) in each out world. 
\begin{equation}
    \label{eqn:obj_general_alpha}
    \mathbb{E}_{P_{out}(D, \theta, z)}\left[\mathbf{1}_{\ell(\theta,x_z,y_z)\leq c_{\alpha}(\theta,x_z,y_z)}\right]= \alpha
\end{equation}
To solve \eqref{eqn:obj_general_alpha}, we first compute empirical loss histogram in the out world $P_{out}$ of the inference game that corresponds to give threshold dependence. We then use various smoothing methods (with details in Appendix B.3) to estimate the CDF for the loss distribution, and then compute its $\alpha$-percentile for any $0\leq \alpha\leq 1$. 

In this way, we compute threshold that approximately \textit{guarantees} FPR $\alpha$ \textit{before} seeing any actual evaluation data for the attack. This is significantly stronger than prior attacks, because \textit{prior works} do \textit{not} aim to (or cannot) construct attacks (as a generic rejection rule in the hypothesis test) for any given FPR. More specifically, \citet{shokri2017membership}, and all the follow-up works based on shadow training, construct a “single attack” that tries to heuristically optimize TPR+TNR. \citet{sablayrolles2019white}, does the same in a more rigorous way (by finding the Bayes optimal solution). \citet{yeom2018privacy}, and its follow-up works, and \citet{carlini2022membership}, do not construct an optimal generic attack, but rather construct a rule for membership detection. However, they only empirically evaluate both FPR and TPR by sweeping through all possible values for the threshold in their rule. A real attacker however does not sweep thresholds. Since the threshold does not have a simple relation to FPR, their attacks do not offer a way to decide which attack threshold to use in order to satisfy a given FPR requirement (before the evaluation).

\subsection{Attack S: MIA via \underline{S}hadow models}
Starting from ~\citet{shokri2017membership}, a substantial body of literature~\citet{nasr2019comprehensive, sablayrolles2019white, leino2020stolen,long2018understanding,song2021systematic,salem2019ml} studied and improved the shadow model attack methodology. All these works follow the same attack framework for membership inference, but they either use a different attack statistics (such as loss or confidence score), or find a more efficient way to perform the attacks. 

 Therefore, we first formalize a strong baseline shadow model membership inference Attack S based on ~\citet{shokri2017membership},~that effectively uses label-dependent attack threshold $c_{\alpha}(y_z)$, as follows.
\begin{equation}
    \text{If } \ell(\theta,x_z,y_z)\leq c_{\alpha}(y_z), \text{ reject }H_0,\nonumber
\end{equation}
Since the threshold function $c_{\alpha}(y_z)$ is constant for all model $\theta$ and data feature $x_z$ (given fixed $y_z=y_0$), this strategy corresponds to the most general form of inference game (Definition~\ref{def:game_random}). Therefore, each out world (i.e., datasets, models and records generated under $b=0$ in the inference game) in the game is as follows. 

\begin{align}
    P_{out}(D,\theta,z):\ & D\xleftarrow{n\ i.i.d.samples} \pi,\nonumber\\
    & \theta\leftarrow\mathcal{T}(D), z\xleftarrow{sample}\pi_{y_z = y_0}. \label{eqn:S_pout}
\end{align}

Here $\pi_{y_z = y_0}$ is the conditional distribution of population data given fixed label $y_0$. The above   out world only reduces the attacker's uncertainty with regard to the label of the target data.

In practice, the attacker could approximate the model-record pair distribution $P_{out}$ in the induced out world (in \eqref{eqn:S_pout}), with the empirical distribution over each of the following sets $S_y$ of shadow models and shadow data points ($y$ could be any specified label).
\begin{align}
    S^y=&\cup_{i=1,2,\cdots} \{(\theta_i,z^i_1), (\theta_i,z^i_2),\cdots\}\nonumber\\
    &where\ \theta_i\leftarrow\mathcal{T}(D_i),\ D_i\xleftarrow{n\ i.i.d. samples}pop\nonumber\\
    & z_{1}^i,z_2^i,\cdots \xleftarrow{i.i.d.} \pi_{y_z=y_0}\nonumber
\end{align}
By the low false positive rate requirement, at most $\alpha$ fraction of these non-member instances (i.e., shadow models and population points) incur loss values smaller than the threshold function, i.e.,
\begin{equation}
    \label{eqn:approx_obj_shadow_alpha}
    \frac{ \lvert
    \left\{
        (\theta,z)\in S^{y_0}:\ell(\theta,x_z,y_z)\leq c_{\alpha}(y_0)
    \right\}
    \rvert}{|S^{y_0}|} = \alpha.
\end{equation}
The solution to \eqref{eqn:approx_obj_shadow_alpha} recovers the class-dependent threshold $c_{\alpha}(y_z)$ that we use in Attack S. Observe that this threshold has no dependency on the target feature $x_z$ or the target model $\theta$. Therefore, Attack S only reduces the uncertainty with regard to label, but does not reduce any uncertainty that the attacker has regarding the target data and target model. This can be seen from the threshold values (or shapes of loss histograms) of Attack S for different targets in Figure~\ref{fig:purchase100_2a_all_attack_loss_dist_plots_new}, which are the same across all four subplots.

\subsection{Attack P: model-dependent MIA via \underline{P}opulation data}

Can we design an attack with better performance by exploiting the dependence of loss threshold on different models? In this section, we design a new model-dependent membership inference Attack P that applies different attack threshold $c_{\alpha}(\theta)$ for different target model~$\theta$. The rationale for this design is to construct an inference attack which exploits the similar statistics as in Attack S, in a more accurate way by computing it only on the target model (instead of on all shadow models), yet with fewer computations (without the need to train shadow models). Similar techniques utilizing population data to infer membership are used in Genomics~\cite{homer2008resolving}, however, to the best of our knowledge, these techniques is not previously used in MIA for machine learning. The hypothesis test with model-dependent attack threshold is as follows.
\begin{equation}
    \text{If } \ell(\theta,x_z,y_z)\leq c_{\alpha}(\theta), \text{ reject }H_0,\nonumber
\end{equation}
Since the threshold function $c_{\alpha}(\theta)$ is constant for all target data feature $x_z$ and label $y_z$ (given fixed target model $\theta$), this strategy corresponds to the inference game for a fixed model (Definition~\ref{def:game_model}). Each out world (i.e., datasets, models and records generated under $b=0$ in the inference game) is as follows.
\begin{align}
    P_{out}(D, \theta, z) :\ & D = D_0, \theta = \theta_0, z\xleftarrow{sample} \pi
    \label{eqn:pout_P}
\end{align}

In each out world, the model is fixed to be a given target model $\theta_0$ (trained on private dataset $D_0$), while the target record is randomly sampled from the population data pool. Therefore, the out world reduces the attacker's uncertainty about a specific target model $\theta_0$. 

Empirically, the attacker could approximate this out world with empirical distribution over the following set $P^{\theta_0}$ of records sampled from the data population pool ($\theta_0$ could be any fixed model). 
\begin{align}
    P^{\theta_0}=\{(\theta_0, z_i)\}_{i=1,2,\cdots}, \text{ where } z_1,z_2,\cdots i.i.d.\leftarrow \pi\nonumber
\end{align}
To ensure low false positive rate $\alpha$ in each out world, the attack threshold $c_{\alpha}(\theta_0)$ must be smaller than $\alpha$ fraction of the non-member instances in $P^{\theta_0}$, i.e., for every possible target model $\theta_0$, 
\begin{equation}
    \label{eqn:approx_obj_pop_alpha}
    \frac{ \Big\lvert
    \left\{
        (\theta, z) \in P^{\theta_0}:\ell(\theta,x_z,y_z)\leq c_{\alpha}(\theta_0)
    \right\}
    \Big\rvert}{|P^{\theta_0}|} = \alpha
\end{equation}
The solution threshold for $c_{\alpha}(\theta_0)$ in \eqref{eqn:approx_obj_pop_alpha} (i.e., the ${\alpha\text{-percentile}}$ of the loss histogram for population data on the target model $\theta_0$) recovers the model-dependent attack threshold for Attack P. Observe that in the attack threshold $c_{\alpha}(\theta)$, there is no dependency on the target feature $x_z$ and label $y_z$. Therefore, attack P only reduces the uncertainty with regard to the target model $\theta$, but does not reduce the uncertainty that attacker has with regard to the target record. This is also reflected by the threshold values (or shapes of loss histograms) of Attack P for different targets  in Figure~\ref{fig:purchase100_2a_all_attack_loss_dist_plots_new}, which are the same in the two subplots targeting different data ($z_1,z_2$) and the same model, but are different between the two subplots targeting different models ($\theta_1,\theta_2$) and the same data.

\input{figure_scripts/purchase100_2a_all_attack_loss_dist_plots_ccs.tex}

\subsection{Attack R: sample-dependent MIA via \underline{R}eference models}

The privacy loss of the model with respect to the target data could be directly related to how susceptible the target data is to be memorized (e.g., being an outlier)~\cite{feldman2020does}.  Based on this finding, we further design the membership inference Attack R that applies different attack threshold $c_{\alpha}(x_z,y_z)$ for different target data sample (both its input features $x_z$ and the label $y_z$). This attack relies on training many reference models on population datasets (excluding the target data sample) and evaluating their loss on one specific record, as described in \citet{long2020pragmatic}.  This attack is also very similar to the membership inference attacks designed for summary statistics and graphical models, which use reference models to compute the probability of the null hypothesis~\cite{sankararaman2009genomic, murakonda2021quantifying}.  We define the test with sample-dependent attack threshold in our case as follows.
\begin{equation}
    \text{If } \ell(\theta,x_z,y_z)\leq c_{\alpha}(x_z,y_z), \text{ reject }H_0,\nonumber
\end{equation}
where $c_{\alpha}(x_z,y_z)$ is a threshold function that depends on the target data feature and label. Since the threshold is the same for all model $\theta$ (given fixed target data $z$), this strategy corresponds to the inference game for a fixed record (Definition~\ref{def:game_record}). Therefore, each out world (i.e., datasets, models and records generated under $b=0$ in the inference game) is as follows. 
\begin{align}
    P_{out}(D,\theta,z):\ & D \xleftarrow{n\ i.i.d. samples} \pi, \theta\leftarrow\mathcal{T}(D) \nonumber\\
    & z = z_0 = (x_{z_0},y_{z_0})\label{eqn:pout_R}
\end{align}
where $z_0$ could be any given target data. Compared to Attack S, the out world in \eqref{eqn:pout_R} is strictly smaller than \eqref{eqn:S_pout}, in the sense that the attacker's uncertainty about the feature $x_{z_0}$ of the target data is also reduced (besides uncertainty about the label $y_{z_0}$).

To obtain empirical samples from the out world, the attacker only need to train the following set of reference models (on random population datasets) and use the target data $z_0$.
\begin{align}
    R^{z_0}=\{(\theta_i,z_0)\}_{i=1,2,\cdots}, \text{ where } \theta_i\leftarrow\mathcal{T}(D_i) \text{ , } D_i\xleftarrow{n\ i.i.d. samples}\pi\nonumber
\end{align}
To ensure low FPR $\alpha$ of Attack R on the above empirical samples from the induced out world, we need to ensure that for every possible target data $z_0$, less than $\alpha$ fraction of the instances in $P$ incur smaller loss than the threshold $c_{\alpha}(x_{z_0},y_{z_0})$. That is
\begin{equation}
    \label{eqn:approx_obj_ref_alpha}
    \frac{ \lvert
    \left\{
        (\theta, z) \in R^{z_0}:\ell(\theta,x_z,y_z)\leq c_{\alpha}(x_{z_0},y_{z_0})
    \right\}
    \rvert}{|R^{z_0}|} = \alpha
\end{equation}
The solution threshold function $c_{\alpha}(x_z,y_z)$ of \eqref{eqn:approx_obj_ref_alpha} (i.e., the $\alpha$-percentile of the loss histogram for target data $z$ on reference models) gives the sample-dependent attack threshold $c_{\alpha}(x_z,y_z)$ for Attack R. Observe that in the attack loss threshold $c_{\alpha}(x_z, y_z)$, there is no dependency on the target model $\theta$. Therefore, it only reduces the uncertainty with regard to the target data feature $x_z$ and label $y_z$, but does not reduce which uncertainty the attacker has with regard to the target model $\theta$. And this can be seen from the threshold values (or shapes of loss histograms) of Attack R for different targets in Figure~\ref{fig:purchase100_2a_all_attack_loss_dist_plots_new}, which are the same in the two subplots targeting different models ($\theta_1,\theta_2$) and the same data, but are different between the two subplots targeting different data ($z_1,z_2$) and the same model.

\subsection{Attack D: model-dependent and sample dependent MIA via \underline{D}istillation}
Can we design a stronger attack that takes advantage of all the information available in the target model and the target data (despite only having black-box access to the target model and knowledge of population data points)? We design a membership inference Attack D whose threshold function $c_{\alpha}(D_{\theta},x_z,y_z)$ that depends on both the target record $z$ and the unknown target dataset $D_{\theta}$ for training the target model $\theta$, as follows.
\begin{equation}
    \text{If } \ell(\theta,x_z,y_z)\leq c_{\alpha}(D_{\theta},x_z,y_z), \text{ reject }H_0,
\end{equation}
%
%
Here $D_{\theta}=\mathcal{T}^{-1}(\theta)$ is the unknown training dataset for target model $\theta$. For the simplicity of derivation, let us first assume that the randomized training algorithm $\mathcal{T}$ has a deterministic inverse mapping $\mathcal{T}^{-1}:\theta\rightarrow D$, i.e., the training dataset for a given model $\theta$ is uniquely specified. (Later we also show how to approximate the training datasets $D_{\theta}$ for model $\theta$ via relabelling population data records, when the training algorithm $\mathcal{T}$ is not invertible.) Then the inverse dataset $D_{\theta}$ takes the same value for all models $\theta$ trained from the same training dataset. Consequently, the threshold function $c_\alpha(D_{\theta},x_z,y_z)$ takes the same value on all the models retrained from the training dataset of a given target model. Therefore, this game corresponds to the inference game for a fixed pair of record and dataset (Definition~\ref{def:game_worstcase}), and each out world (i.e., targets generated under $b=0$ in the inference game) is as follows.
\begin{align}
    P_{out}(D,\theta, z):\ & D = D_{\theta_0}\setminus z_0 = \mathcal{T}^{-1}(\theta_0)\setminus z_0, \theta\leftarrow\mathcal{T}(D)\nonumber\\
    & z = z_0 = (x_{z_0}, y_{z_0}),\label{eqn:pout_D}
\end{align}

Here $\theta_0$ and $z_0$ could be any given pair of target model and target data. In this out world $P_{out}$, the only remaining uncertainty is the randomness of the training algorithm $\mathcal{T}$, while all the uncertainty with regard to the target dataset and target record is eliminated. 

To (approximately) sample from this out world, which contains models retrained on the same training dataset as the target model, our proposed attacker generates the following set of self-distilled models using the target model $\theta_0$ and target data $z_0$.
\begin{align}
    M^{\theta_0,z_0} & =\{(\theta_i, z_0)\}_{i=1,2,\cdots}, \text{ where } \theta_i\leftarrow\mathcal{T}(D'_i), \\
    &D'_i\xleftarrow{\text{soft-labeled with }\theta}D_i\xleftarrow{n\ i.i.d. samples}\pi
\end{align}
Here we are assuming that these distilled models $M=\{\theta_i\}_{i=1,2,\cdots}$ are good approximations for models retrained on the target model's training dataset (excluding the target data), which are in the out world \eqref{eqn:pout_D}. This is reasonable because the distillation dataset $D'_i$ consists of soft-labeled population data points by the target model $\theta$, and therefore roughly recovers the training dataset of the target model except for information about the target data.~\footnote{This is because under large population data pool, a target record is very unlikely to be sampled into the distillation dataset.}

To ensure low false positive rate $\alpha$, the threshold function needs to satisfy the following equation over distilled models.
\begin{equation}
    \label{eqn:approx_obj_distill_alpha}
    \frac{ \Big\lvert
    \left\{
        (\theta,z) \in M^{\theta_0,z_0}:\ell(\theta,x_z,y_z)\leq c_{\alpha}(\theta_0,x_{z_0},y_{z_0})
    \right\}
    \Big\rvert}{|M^{\theta_0,z_0}|} = \alpha
\end{equation}
By solving \eqref{eqn:approx_obj_distill_alpha} with the $\alpha$-percentile of the loss histogram for the target data on distilled models, we obtain the attack threshold function $c_{\alpha}(D_\theta, x_z, y_z)$ in Attack D. Observe that this threshold depends on both the target model $\theta$ and the target data $z$, therefore it reduces the attacker's uncertainties with regard to both the target data and the target model (dataset). This can be seen from the threshold values (or shapes of loss histograms) of Attack D for different targets in Figure~\ref{fig:purchase100_2a_all_attack_loss_dist_plots_new}, which are different between four pairs of target model and data. Because of this dependence on both target model and target data, Attack D more precisely measures the leakage of the model, which is desirable for privacy auditing.

\subsection{(Idealized) Attack L: \underline{L}eave-one-out attack}

\label{subsec:loo_explain}

An ideal attack, that removes the randomness over the training data (except the target data that could potentially be part of the training set) would be the \textit{leave-one-out} attack. In this attack, the adversary trains reference models $\theta'$ on $D \setminus \{(x_b, y_b)\}$, where the randomness only comes from the training algorithm $\mathcal{T}$. The attack would be in the same class of attacks as in Attack D, as it will be a model-dependent and data-dependent attack. It also\ runs a similar hypothesis test, however the attack requires assuming the adversary already knows exactly the $n-1$ data records in $D \setminus \{(x_b, y_b)\}$. This is a totally acceptable assumption in the setting of privacy auditing. However, for practical settings, this assumption may be too strong.

Note that Attack D aims at reproducing the performance of the leave-one-out attack without (impractically) assuming the knowledge of $n-1$ data records in $D \setminus \{(x_b, y_b)\}$.

\subsection{Summary and Attacks Comparison}
\label{ssec:summary_attack_explain}

For more accurately identifying whether a data point $z$ has been part of the training set of $\theta$, here are the main underlying questions for attacks we present in this section:

\begin{itemize}[leftmargin=1.25em]

	\item How likely is the loss $\ell(\theta, z)$ to be a sample from the distribution of loss for random population data on (Attack P: the same model) (Attack S: models trained on the population data)? We reject the null hypothesis depending on the tolerable false positive rate $\alpha$ and the estimated distribution of loss. 
	
	\item How likely is the loss $\ell(\theta, z)$ to be a sample from the distribution of loss for the target data $z$ on (Attack R: models trained on population data) (Attack D: models trained to be as close as possible to the target model, using distillation) (Attack L: models trained on $n-1$ records from $D$ excluding $z$)? We reject the null hypothesis d depending on the tolerable false positive rate $\alpha$ and the estimated distribution of loss.
	
\end{itemize}

Effectively, these questions cover different types of hypothesis tests that could be designed for performing \textit{confident} membership inference attacks under \textit{different uncertainties} about the out world (i.e., different inference games).  We expect these attacks to have different errors due to the uncertainties that can influence their performance.  Getting close to the performance of the ideal leave-one-out attack, without impractically assuming the adversary knows the data records in $D \setminus \{(x_b, y_b)\}$, is the ultimate goal of our attack. 

Attacks S and P are of the same nature. However, attack S could potentially have a higher error due to its imprecision in using other models to approximate the loss distribution of the target model on population data. Attacks R, D are also of the same nature. However, we expect attacks D to have more confidence in the tests due to reducing the uncertainty of other training data that can influence the model's loss.  Thus, we expect attack D to be the closest to the strongest (yet impractical) attack which is the leave-one-out attack.

\section{Empirical Evaluation}
\label{sec:empirical}

In this section, we empirically study the performance of different attacks.
Our goal is \textit{not} only to show that certain models trained on certain datasets are vulnerable to membership inference attacks, \textit{nor} only to show that our new attacks are stronger than previous attacks in a particular metric. Our goal is to understand:

\begin{enumerate}[leftmargin=2.25em]
    \item How does reduced uncertainty in different attacks improve their performance (under the same confidence) on \textit{general} and \textit{worst-case} target models and data records?
    \item How and why membership predictions (under the same confidence) \textit{differ} between two different attacks, when applied on the \textit{same} set of target models and data records?
    \item How vulnerable are the record that different attacks (with the same confidence) agree or disagree on? Here we refer to vulnerability as the performance of leave-one-out attack on models trained with and without this record (while fixing the remaining target dataset). Why do certain attacks miss extremely vulnerable data records?
\end{enumerate}

\begin{figure*}[h]
    \centering
    {\tikzset{external/export=false}
    \begin{subfigure}{.33\textwidth}
        \begin{tikzpicture}[scale=0.7]
            \begin{axis}
               [name=purchase1002aplot, 
                    xlabel={FPR}, ylabel={TPR},
                    xmin = 0, xmax = 1,
                    ymin = 0, ymax = 1, yscale=0.8,
                    xtick={0.0,0.2,0.4,0.6,0.8,1.0}, xticklabels={0.0,0.2,0.4,0.6,0.8,1.0},
                    grid = major, title style={yshift=0.9cm},
                    legend style={at={(1.0, 0.0)},anchor=south east}]
                        \addplot[solid, very thick, blue, no marks] table[skip first n=1,x index=1, y index=2, col sep=comma] {"data/tpr_vs_fpr_curves/purchase100_2a_ccs/attack_S_linear_all_models_tpr_vs_fpr_data.csv"};
                        \addplot[solid, very thick, green!50!black, no marks] table[skip first n=1,x index=1, y index=2, col sep=comma] {"data/tpr_vs_fpr_curves/purchase100_2a_ccs/attack_P_linear_all_models_tpr_vs_fpr_data.csv"};
                        \addplot[solid, very thick, orange, no marks] table[skip first n=1,x index=1, y index=2, col sep=comma] {"data/tpr_vs_fpr_curves/purchase100_2a_ccs/attack_R_linear_all_models_tpr_vs_fpr_data.csv"};
                    \addplot[solid, very thick, red, no marks] table[skip first n=1,x index=1, y index=2, col sep=comma] {"data/tpr_vs_fpr_curves/purchase100_2a_ccs/attack_D_linear_all_models_tpr_vs_fpr_data.csv"};
               \legend{Attack S (AUC = 0.754), Attack P (AUC = 0.756), Attack R (AUC = 0.808), Attack D (AUC = 0.831)}
            \end{axis}
        \end{tikzpicture}
        \caption{Overall TPR-FPR}
    \end{subfigure}\hfill
    \begin{subfigure}{.33\textwidth}
        \begin{tikzpicture}[scale=0.7]
            \begin{axis}
               [name=purchase1002aplot_low_FPR, 
                   ymode = log,
                   ymin = 0.0001, ymax = 1, yscale=0.8,
                   xmode = log,
                   xmin = 0.0001, xmax = 1,
                   xtick={0.0001,0.001,0.01,0.1,1.0}, xticklabels={$10^{-4}$,$10^{-3}$,$10^{-2}$,$10^{-1}$,$10^{0}$},
                   grid = major, title style={yshift= 0.9cm},
                   legend style={at={(1.0, -0.25)},anchor=south east}]
                       \addplot[solid, very thick, blue, no marks] table[skip first n=1,x index=1, y index=2, col sep=comma] {"data/tpr_vs_fpr_curves/purchase100_2a_ccs/attack_S_linear_all_models_tpr_vs_fpr_data.csv"};
                      \addplot[solid, very thick, green!50!black, no marks] table[skip first n=1,x index=1, y index=2, col sep=comma] {"data/tpr_vs_fpr_curves/purchase100_2a_ccs/attack_P_linear_all_models_tpr_vs_fpr_data.csv"};
                      \addplot[solid, very thick, orange, no marks] table[skip first n=1,x index=1, y index=2, col sep=comma] {"data/tpr_vs_fpr_curves/purchase100_2a_ccs/attack_R_linear_all_models_tpr_vs_fpr_data.csv"};
                    \addplot[solid, very thick, red, no marks] table[skip first n=1,x index=1, y index=2, col sep=comma] {"data/tpr_vs_fpr_curves/purchase100_2a_ccs/attack_D_linear_all_models_tpr_vs_fpr_data.csv"};
            \end{axis}
        \end{tikzpicture} 
        \caption[Network]{Focus on Small FPR Region}
    \end{subfigure}\hfill
    \begin{subfigure}{.33\textwidth}
        \begin{tikzpicture}[scale=0.7]
            \begin{axis}
               [name=purchase1002aplot_high_TPR, 
                    xmin = 0.2, xmax = 1,
                    ymin = 0.7, ymax = 1, yscale=0.8,
                    xtick={0.0,0.2,0.4,0.6,0.8,1.0}, xticklabels={0.0,0.2,0.4,0.6,0.8,1.0},
                    grid = major, title style={yshift=0.9cm},
                    legend style={at={(1.0, 0.58)},anchor=south east}]
                       \addplot[solid, very thick, blue, no marks] table[x=x, y=y, col sep=comma] {"data/tpr_vs_fpr_curves/purchase100_2a_ccs/attack_S_linear_all_models_tpr_vs_fpr_data.csv"};
                      \addplot[solid, very thick, green!50!black, no marks] table[skip first n=1,x index=1, y index=2, col sep=comma] {"data/tpr_vs_fpr_curves/purchase100_2a_ccs/attack_P_linear_all_models_tpr_vs_fpr_data.csv"};
                      \addplot[solid, very thick, orange, no marks] table[skip first n=1,x index=1, y index=2, col sep=comma] {"data/tpr_vs_fpr_curves/purchase100_2a_ccs/attack_R_linear_all_models_tpr_vs_fpr_data.csv"};
                    \addplot[solid, very thick, red, no marks] table[skip first n=1,x index=1, y index=2, col sep=comma] {"data/tpr_vs_fpr_curves/purchase100_2a_ccs/attack_D_linear_all_models_tpr_vs_fpr_data.csv"};
            \end{axis}
        \end{tikzpicture}    
        \caption{Focus at High TPR Region}
    \end{subfigure}
    }
    \caption{FPR vs TPR with AUC scores for all attacks on Purchase100 Dataset experimental setups II (details in Appendix B). Attack S, R and D use 999 shadow, reference and distilled models respectively, and Attack P uses 1000 population data points (per-class). The evaluated TPR and FPR are averaged over 10 target models for Attack S, P and R, while for Attack D we only average over 3 target models due to limited computational resources. Attack D achieves the highest AUC score, followed closely by Attack R. For small FPR in subplot (b), Attack R has slightly higher TPR than Attack D, which is more than 10x higher than that of Attack S and P. (We compute attack threshold for small $\alpha<0.001$ by smoothening the loss histogram with linear interpolation method, as explained in Appendix B.3.) Meanwhile, for high TPR region in subplot (c), at any fixed high TPR ($>0.7$), Attack D enables roughly 2x smaller FPR than Attack S, P and R. }
    \label{fig:purchase100_fpr_vs_tpr_plots_new}
\end{figure*}
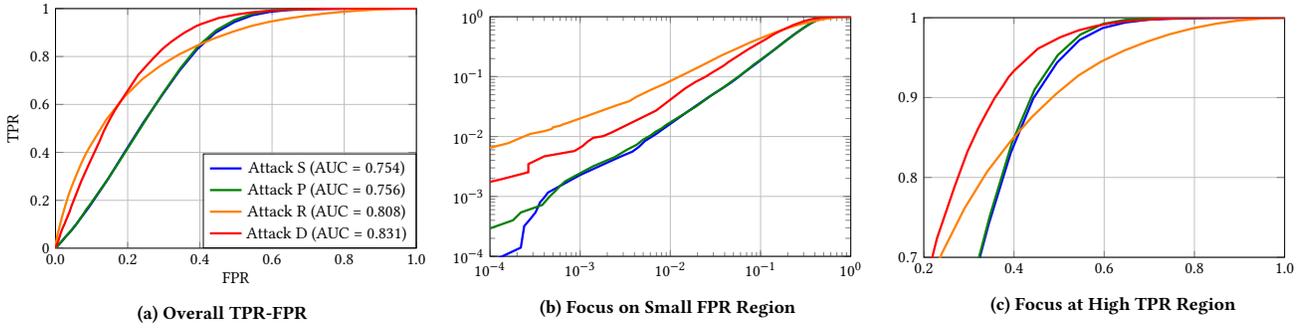

Here we report the attack results on models trained using the Purchase100 dataset. We include evaluations on more datasets and experiment setups in Appendix C. We implement our experiments in the open source privacy meter tool and release our code.~\footnote{\url{https://github.com/privacytrustlab/ml_privacy_meter/tree/master/research/2022_enhanced_mia}}

\subsection{Illustration of Attack Threshold}

How differently does the attack threshold in Attack S, P, R, D depend on the target model and the target data? How do the dependencies of the attack threshold on model and data sample affect an attack's success? To investigate these questions, we plot the loss histograms that different attacks use to compute the thresholds for two randomly chosen target models on two different records in Figure~\ref{fig:purchase100_2a_all_attack_loss_dist_plots_new}. As complement, we also show distributions of the
threshold chosen under more pairs of different target models and target records (under the same level of
FPR) in Appendix C.1. 

We observe that the attack threshold's dependency on model and record make its loss histogram more concentrated (i.e., reduce uncertainty), thus forming sharper attack signals and enabling attack success. In Figure~\ref{fig:purchase100_2a_all_attack_loss_dist_plots_new}, \textbf{Attack S} uses constant threshold for attacking all four different targets, because it only estimates how likely does a \textit{random} nonmember data record of a \textit{random} target model (trained from population data) incur small loss. This is overly general and makes Attack S wrong for all four targets. \textbf{Attack P} uses different thresholds for different target models, but still uses the same threshold for different target data ($z_1$ and $z_2$). Consequently, Attack P still wrongly predicts all four targets. On the contrary, \textbf{Attack R} considers how likely does a \textit{particular} target data record $z$ incur small loss  on reference models, and constructs different (more concentrated) out worlds that depend on target data. Moreover, \textbf{Attack D} obtains even more concentrated out worlds, by considering how likely does a \textit{particular} target record $z$ incur small loss on distilled models (which approximate leave-one-out models). Due to these sharper attack signals (reduced uncertainty), Attack D predicts all four targets correctly with high confidence  (i.e., its thresholds are well above the target's loss).

\subsection{Evaluation of Attack Performance}
\label{ssec:evaluation_attack_performance}

Which attack (S, P, R, or D) provides the best performance? How do we evaluate the strength of an attack besides using its accuracy? How can we design fine-grained attack evaluation metrics for comparing the performance of different attacks under the same level of confidence (FPR)?

\begin{figure*}[h!]
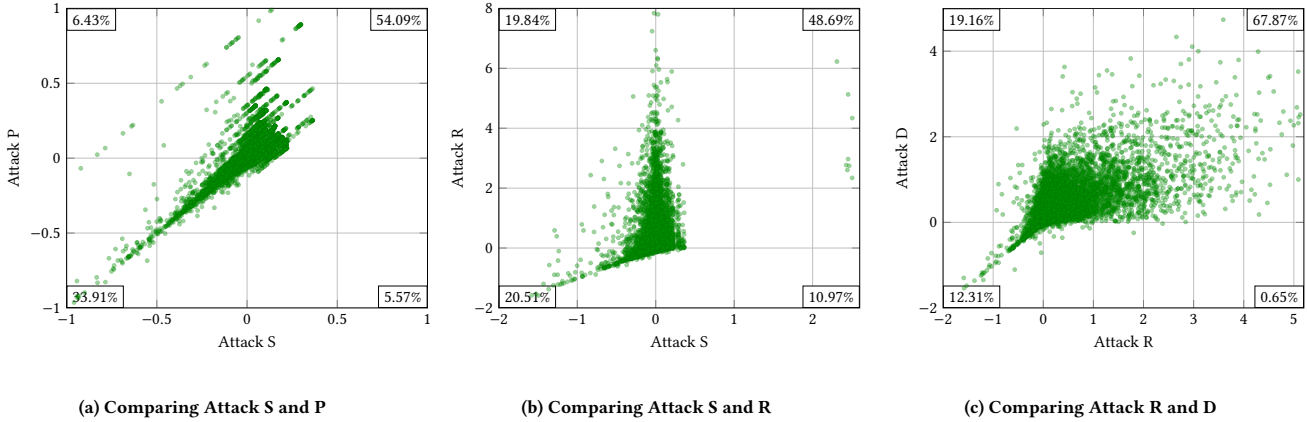

    \centering
    {
     \begin{subfigure}[b]{.3\textwidth}
     \if\compileScatterFigures1
     \begin{tikzpicture}[
         scale=0.7,
         every mark/.append style={mark size=1pt, fill=green!60!black, draw=green!50!black, opacity=0.4}
     ]
         \begin{axis}
             [name=attackSPtrainpointsplot,
             xlabel={Attack S}, ylabel={Attack P},
             ymin = -1.0, ymax = 1.0,
             xmin = -1.0, xmax = 1.0,
             grid = major]

             \addplot[only marks] table[skip first n=1,x index=1, y index=2, col sep=comma] {"data/attack_comparison_scatterplots/purchase100_2a/models_0_1_2__0_3_attack_0_2_train_data.csv"};

             \node at (axis cs:1,1) [anchor=north east, draw=black, fill=white] {54.09\%};
             \node at (axis cs:-1,1) [anchor=north west, draw=black, fill=white] {6.43\%};
             \node at (axis cs:-1,-1) [anchor=south west, draw=black, fill=white] {33.91\%};
             \node at (axis cs:1,-1) [anchor=south east, draw=black, fill=white] {5.57\%};
         \end{axis}
     \end{tikzpicture}
     \else
        \includegraphics[]{figScatter/\filename-figure\thescatterFigureNumber.pdf}
        \stepcounter{scatterFigureNumber}
     \fi
     \caption{Comparing Attack S and P}
     \end{subfigure}\hfill
     \begin{subfigure}[b]{.3\textwidth}
     \if\compileScatterFigures1
     \begin{tikzpicture}[
         scale=0.7,
         every mark/.append style={mark size=1pt, fill=green!60!black, draw=green!50!black, opacity=0.4}
     ]
         \begin{axis}
             [name=attackSRtrainpointsplot,
             xlabel={Attack S}, ylabel={Attack R},
             ymin = -2.0, ymax = 8.0,
             xmin = -2.0, xmax = 2.6,
             grid = major]

             \addplot[only marks] table[skip first n=1,x index=1, y index=2, col sep=comma] {"data/attack_comparison_scatterplots/purchase100_2a/models_0_1_2__0_3_attack_0_1_train_data.csv"};

             \node at (axis cs:2.6,8) [anchor=north east, draw=black, fill=white] {48.69\%};
             \node at (axis cs:-2,8) [anchor=north west, draw=black, fill=white] {19.84\%};
             \node at (axis cs:-2,-2) [anchor=south west, draw=black, fill=white] {20.51\%};
             \node at (axis cs:2.6,-2) [anchor=south east, draw=black, fill=white] {10.97\%};
         \end{axis}
     \end{tikzpicture}
     \else
        \includegraphics[]{figScatter/\filename-figure\thescatterFigureNumber.pdf}
        \stepcounter{scatterFigureNumber}
     \fi
     \caption{Comparing Attack S and R}
     \end{subfigure}\hfill
     \begin{subfigure}[b]{.3\textwidth}
     \if\compileScatterFigures1
     \begin{tikzpicture}[
         scale=0.7,
         every mark/.append style={mark size=1pt, fill=green!60!black, draw=green!50!black, opacity=0.4}
     ]
         \begin{axis}
             [name=attackRDtrainpointsplot,
             xlabel={Attack R}, ylabel={Attack D},
             ymin = -2.0, ymax = 5.0,
             xmin = -2.0, xmax = 5.2,
             grid = major]

             \addplot[only marks] table[skip first n=1,x index=1, y index=2, col sep=comma] {"data/attack_comparison_scatterplots/purchase100_2a/models_0_1_2__0_3_attack_1_3_train_data.csv"};

             \node at (axis cs:5.2,5) [anchor=north east, draw=black, fill=white] {67.87\%};
             \node at (axis cs:-2,5) [anchor=north west, draw=black, fill=white] {19.16\%};
             \node at (axis cs:-2,-2) [anchor=south west, draw=black, fill=white] {12.31\%};
             \node at (axis cs:5.2,-2) [anchor=south east, draw=black, fill=white] {0.65\%};
         \end{axis}
     \end{tikzpicture}
     \else
        \includegraphics[]{figScatter/\filename-figure\thescatterFigureNumber.pdf}
        \stepcounter{scatterFigureNumber}
     \fi
     \caption{Comparing Attack R and D}
     \end{subfigure}\hspace*{\fill}
    }
    \caption{Scatter plot comparing the membership prediction of different attacks (with the same FPR requirement $\alpha = 0.3$) on the training dataset of a target model in Purchase100 II setup. Each dot on the scatter plot corresponds to a particular training data record. Each coordinate of the dot equals the loss threshold used by a particular attack on the target data minus the loss of the target data (on the target model). Therefore, the coordinate approximates the attack's confidence that a given target data is member. We compare two attacks in each subplot through x-axis and y-axis: Plot (a) compares Attacks S and P; plot (b) compares Attack S and R; plot (c) compares Attacks R and D. The percentage number in each corner stands for the fraction of training data records whose confidence values land in corresponding regions (partitioned by the $x$-axis and $y$-axis). Attacks that are similar to each other would produce membership predictions with correlated confidence, thus incurring more points in the diagonal regions of the scatter plot, i.e. northeast and southwest. If the y-axis attack is stronger, the scatter plot would shift more towards the northwest region (where the y-axis attack is correct while x-axis attack is wrong) than the southeast region.}
    \label{fig:purchase100_2a_attack_comparison_scatterplots_train}
\end{figure*}

\paragraph{Improved Average Performance of Attacks.} We quantify the attacker's \textit{average} performance on \textit{general} targets using two metrics: its true positive rate (TPR), and its false positive rate (FPR), over the \textit{random} member and non-member data of \textit{random} target models. We use the ROC curve to capture the tradeoff between the TPR and FPR of an attack, as its threshold $c_{\alpha}$ is varied across different FPR tolerance $\alpha$. The AUC (area under the ROC curve) score then measures the strength of an attack. We plot the ROC curves of all attacks on the Purchase100 dataset, and compute their AUC (area under the ROC curve) score in Figure~\ref{fig:purchase100_fpr_vs_tpr_plots_new}. The attack with the highest AUC score on Purchase100 is Attack D, which has the least level of uncertainty, as discussed Section~\ref{section3}. We further show in Appendix C that this trend of AUC scores holds for different Purchase100 training setups, and different datasets such as CIFAR10, CIFAR100 and MNIST. This shows that reducing uncertainties (in Attack R and D) effectively improves attack performance, when compared to Attack S and P designed for overly general targets.

\paragraph{Improved Performance of Attacks in High Confidence Regime} Besides the AUC score, we also observe that in small FPR region (FPR$<0.2$) of Figure~\ref{fig:purchase100_fpr_vs_tpr_plots_new} (b), Attack R and Attack D has significantly (above 10x) higher TPR than Attack S and P under a fixed low FPR. However, Attack R performs slightly better than Attack D in terms of TPR at small FPR. We believe this is because the additional approximation error in 
Attack D, for using distilled models to \textit{approximate} retrained
models. We believe it is an important followup work to improve this approximation
quality of Attack D at small FPR, while still being able to extract information
about the unknown remaining target dataset (to get close to the performance of idealized impractical Attack L). Moreover, it is interesting that for fixed \textit{high TPR}, Attack D has a much better performance (lower FPR) than any other attacks (S, P and R) in Figure~\ref{fig:purchase100_fpr_vs_tpr_plots_new} (c). Note that differentially private algorithms aim to (roughly speaking) bound the total error (FPR and FNR: 1-TPR). So, both high TPR and low FPR are important for privacy risk. These high performances of Attack R and D at high confidence region (low FPR or high TPR) shows the benefits of reducing uncertainty on attack performance.
    
\begin{table*}[!h]
    \caption{Agreement rate between ground truth (GT) membership values, and Attacks L, S, P, R, D for 500 train and 500 test data points. The upper triangle of the table corresponds to the agreement rates of train data points, whereas the lower triangle corresponds to the agreement rates of test data points. The experimental setup is Purchase100 II, with effective $FPR \approx 0.05$ (left table) and effective $FPR \approx 0.3$ (right table).}\label{tab:purchase100_2a_train_test_agreement_with_leave_one_out}
    \begin{subtable}{.5\linewidth}\centering
        \caption{Agreement between Attacks with FPR 0.05}\label{tab:agreement_subtable_fpr005}
    {
    \begin{tabular}{|c|c|c|c|c|c|c|}
        \hline
                    & \textbf{S} & \textbf{P} & \textbf{R} & \textbf{D} & \textbf{L} & \textbf{GT} \\
        \hline
        \textbf{S} &             & 0.972      & 0.774      & 0.822       & 0.246     & 0.066        \\
        \hline
        \textbf{P}  & 0.968       &            & 0.770     & 0.834      & 0.238      & 0.066        \\
        \hline
        \textbf{R}  & 0.948        & 0.944      &           & 0.792      & 0.360     & 0.204        \\
        \hline
        \textbf{D}  & 0.964       & 0.956      & 0.948      &            & 0.352    & 0.180        \\
        \hline
        \textbf{L}  & 0.914       & 0.918      & 0.934      & 0.930      &          & 0.816        \\
        \hline
        \textbf{GT}  & 0.946        & 0.950       & 0.950      & 0.950      & 0.944     &              \\
        \hline
    \end{tabular}
    }
    \end{subtable}%
    \begin{subtable}{.5\linewidth}\centering
        \caption{Agreement between Attacks with FPR 0.3}\label{tab:agreement_subtable_fpr03}
    {
    \begin{tabular}{|c|c|c|c|c|c|c|}
        \hline
                    & \textbf{S} & \textbf{P} & \textbf{R} & \textbf{D} & \textbf{L} & \textbf{GT} \\
        \hline
        \textbf{S} &             & 0.892      & 0.534      & 0.644      & 0.592       & 0.576       \\
        \hline
        \textbf{P}  & 0.932       &            & 0.526      & 0.668      & 0.600      & 0.592       \\
        \hline
        \textbf{R}  & 0.750       & 0.742      &            & 0.814      & 0.754      & 0.730       \\
        \hline
        \textbf{D}  & 0.878       & 0.894      & 0.804       &            & 0.804      & 0.78      \\
        \hline
        \textbf{L}  & 0.692        & 0.672      & 0.738      & 0.714      &            & 0.968        \\
        \hline
        \textbf{GT}  & 0.700       & 0.696      & 0.698      & 0.694      & 0.700      &            \\
        \hline
    \end{tabular}
    }
    \end{subtable}
\end{table*}

\subsection{Detailed Comparison of Different Attacks}
\label{ssec:attack_comparison}

Besides attack strength, how differently are the attacks performing on the \textit{same} set of input target models and target points? How do the reduced uncertainties of attacks affect the attacker's membership predictions quantitatively (in accuracy) and qualitatively? How often do the membership predictions of two different attacks agree with each other? Do the attacks (with different amounts of reduced uncertainty) have different confidence on the same input (being a member)? How far away are the attack performances from the most ideal leave-one-out attacks that reduces all uncertainty (described in Section~\ref{section3})? Answers to these questions require understanding how and why the attacks perform differently, for which we do detailed comparisons between attacks as follows.

\paragraph{Similarity of Attacks with Each Other in Predictions and Confidence.} In Figure~\ref{fig:purchase100_2a_attack_comparison_scatterplots_train}, we show the scatter plot that compares the membership predictions and confidence of different attacks, on the training data of a target model in Purchase 100 II experiment setup (described in Appendix B). The scatter plot Figure~\ref{fig:purchase100_2a_attack_comparison_scatterplots_train} (a) compares Attack S and P, and shows that they make similar membership predictions with similar confidence (as the plot is correlated around the diagonal line). However, Attack P performs slightly better than Attack S, because there are slightly more points in northwest region than the southeast region of the scatter plot Figure~\ref{fig:purchase100_2a_attack_comparison_scatterplots_train} (a), which Attack P predicts correctly while Attack S fails. Meanwhile, in Figure~\ref{fig:purchase100_2a_attack_comparison_scatterplots_train} (b), we observe that Attack S and R makes very different membership predictions, because the plot is far away from the diagonal line. Attack R is also significantly stronger than Attack S, because the northwest region contains a large fraction (19.84\%) of inputs (which Attack R correctly predicts as member while Attack S fails). Moreover, in northeast and southwest region both Attack S and R are making correct membership predictions, but Attack R tend to have higher confidence since most of the points lie above the diagonal line. Lastly, we observe from Figure~\ref{fig:purchase100_2a_attack_comparison_scatterplots_train} (c) that Attack D dominates Attack R for correctly guessing membership of training points, because the plot is shifted towards the northwest region.

\paragraph{Gap Between Attacks and the Ground Truth.} From Table~\ref{tab:purchase100_2a_train_test_agreement_with_leave_one_out} (b), among all attacks, Attack D agrees with the ground truth the most on train points (upper triangle) under confidence requirement FPR=$0.3$. This matches our observation in Figure~\ref{fig:purchase100_2a_all_attack_loss_dist_plots_new}, that Attack D has the largest threshold among four attacks under the same confidence requirement $\alpha$, thus correctly predicting more points as members without harming its confidence about FPR. 
We also observe that the agreement rate between Attack S and Attack P is as high as $0.9$ in Table~\ref{tab:purchase100_2a_train_test_agreement_with_leave_one_out}. This matches their linear comparison scatter plot in Figure~\ref{fig:purchase100_2a_attack_comparison_scatterplots_train}, and shows that Attack S and P are very similar in nature. This is consistent with our discussion in Section~\ref{section3} (i.e., Attack S and Attack P reflect similar uncertainty of the adversary).

\paragraph{Closeness of Attacks to Ideal Leave-one-out Attack.} Another interesting observation from Table~\ref{tab:purchase100_2a_train_test_agreement_with_leave_one_out} (b) is that, among all the attacks, Attack D agrees with the Attack L the most on predicting membership of training points (upper triangle), with agreement rate $0.804$. This is consistent with our understanding that Attack D (distillation) is highly similar in nature with Attack L (leave-one-out), by approximating the training dataset of a target model and performing retraining (via distillation), as discussed in Section~\ref{section3}.

\begin{figure*}
    \begin{minipage}[c]{0.325\linewidth}
        \resizebox{\linewidth}{!}{%
        \if\compileFigures1
        \begin{tikzpicture}
            \begin{axis}
               [name=purchase1002aplot,
                   xlabel={FPR}, ylabel={TPR},
                   ymin = -0.05, ymax = 1.1, yscale=0.8,
                   xtick={0.0,0.2,0.4,0.6,0.8,1.0}, xticklabels={0.0,0.2,0.4,0.6,0.8,1.0},
                   grid = major, title style={yshift=0.9cm},
                   legend style={at={(1.0, 0.0)},anchor=south east}]
                       \addplot[solid, very thick, blue] table[skip first n=1,x index=1, y index=2, col sep=comma] {"data/hard_examples/aggregated_hard_examples/diff_points/avg_attack_loo_results_50_loo_models_alpha_0.3_num_hard_example_10.csv"};
                    \addplot[solid, very thick, red] table[skip first n=1,x index=1, y index=2, col sep=comma] {"data/hard_examples/aggregated_hard_examples/diff_points/avg_attack_loo_results_50_loo_models_alpha_0.3_num_all_correct_example_10.csv"};
                    \addplot[solid, very thick, green!50!black] table[skip first n=1,x index=1, y index=2, col sep=comma] {"data/hard_examples/aggregated_hard_examples/diff_points/avg_attack_loo_results_50_loo_models_alpha_0.3_num_average_example_50_10.csv"};
                      \addplot[solid, very thick, orange] table[skip first n=1,x index=1, y index=2, col sep=comma] {"data/hard_examples/aggregated_hard_examples/diff_points/avg_attack_loo_results_50_loo_models_alpha_0.3_num_sp_correct_example_10.csv"};
        
                    \addlegendentry{R Correct (AUC = 0.984)}
                    \addlegendentry{All Correct (AUC = 0.899)}
                    \addlegendentry{Random (AUC = 0.805)}
                    \addlegendentry{SP Correct (AUC = 0.741)}
            \end{axis}
        \end{tikzpicture}
        \else
        \includegraphics[width=0.3\linewidth]{fig/\filename-figure\thefigureNumber.pdf}
        \stepcounter{figureNumber}
        \fi
        }
        \caption{Vulnerabilities of different types of records, in terms of the performance of Attack L on $50$ models trained with and without each record in each type (in Purchase100 II setup). The records are found via attacking $50$ target models with confidence $\alpha = 0.3$ (described in Section~\ref{ssec:why_vulnerable}). We observe that R correct records are the most vulnerable (of four types), indicating that Attack R and D are stronger than S and P in identifying vulnerable data.}
        \label{fig:purchase100_hard_example_fpr_vs_tpr_plot}
    \end{minipage}
    \hfill
    \begin{minipage}[c]{0.32\linewidth}
        \resizebox{\linewidth}{!}{%
        \if\compileFigures1
            \begin{tikzpicture}
            \begin{axis}[
                ybar, yscale=0.8,
                xlabel={log(loss)}, ylabel={density},
                legend style={at={(0.0, 1.1)},anchor=north west},
            ]

            \addplot+[hist={data=x, density, bins=50}, color=black, fill=blue, opacity=0.4] table[skip first n=1,x index=2, col sep=comma] {"data/diff_points_loss_histograms/purchase100_2a/loo_0_hard_examples_loss_values_alpha_0_3.csv"};
            \addplot+[hist={data=x, density, bins=75}, color=black, fill=red, opacity=0.4] table[skip first n=1,x index=2, col sep=comma] {"data/diff_points_loss_histograms/purchase100_2a/loo_0_all_correct_examples_loss_values_alpha_0_3.csv"};
            \addplot+[hist={data=x, density, bins=75}, color=black, fill=green!50!black, opacity=0.4] table[skip first n=1,x index=2, col sep=comma] {"data/diff_points_loss_histograms/purchase100_2a/loo_0_all_loss_values_alpha_0_3.csv"};
            \addplot+[hist={data=x, density, bins=60}, color=black, fill=orange, opacity=0.4] table[skip first n=1,x index=2, col sep=comma] {"data/diff_points_loss_histograms/purchase100_2a/loo_0_sp_correct_examples_loss_values_alpha_0_3.csv"};

            \addlegendentry{R Correct}
            \addlegendentry{All Correct}
            \addlegendentry{Random}
            \addlegendentry{SP Correct}
            \end{axis}
            \end{tikzpicture}
        \else
            \includegraphics[]{fig/\filename-figure\thefigureNumber.pdf}
            \stepcounter{figureNumber}
        \fi
        }
        \caption{Loss histogram of different types of records. The records are found via attacking $50$ target models (trained on Purchase 100 II setup) under confidence $\alpha = 0.3$, as described in Section~\ref{ssec:why_vulnerable}. We observe that R correct records generally have higher loss than SP correct records (and All correct records). This suggests that Attack R and D could identify vulnerable training records with intrinsically higher loss while S and P miss them. }
        \label{fig:purchase100_hard_example_histogram}
    \end{minipage}%
    \hfill
    \begin{minipage}[c]{0.325\linewidth}
        \includegraphics[width=\linewidth]{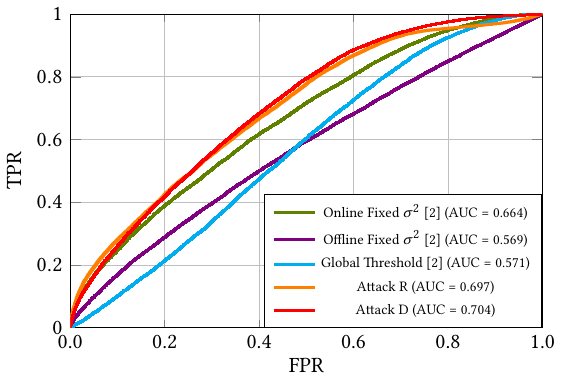}
    \caption{Comparison between our Attack R, D with the concurrent attacks~\cite{carlini2022membership} for CIFAR10 Setup IV. The TPR and FPR are evaluated over all member and non-member records of 1 target model. Due to computational constraint, we only use 29 models for constructing each attack. Compared to~\cite{carlini2022membership}, Attack R and D achieve higher AUC scores, and Attack R also achieves similar TPR at small FPR (Figure~10 in Appendix C.4).}
    \label{fig:related_work_carlini}
    \end{minipage}%
\end{figure*}

\subsection{Why Some Vulnerable Points are Not Detected by Certain Attacks}

\label{ssec:why_vulnerable}

In previous sections, we observed that Attack R and D identify significantly more training data as members than Attack S and P, while having the same confidence (in the first rows of Table~\ref{tab:purchase100_2a_train_test_agreement_with_leave_one_out}). In this section, we more closely investigate \textit{how} different attacks identify records with \textit{different levels of vulnerability}. Here we refer to the \textit{vulnerability} of a data
record as how indistinguishable the models trained with and without a given record
are (under a fixed remaining dataset) (i.e., the leave-one-out setting
studied in Attack L and memorization literature~\cite{feldman2020does,van2021memorization}). Therefore, we use the performance of the leave-one-out attack (Attack L) as a crucial tool for accurately estimating this indistinguishability or vulnerability of identified record. We look at multiple target models trained on the \textit{same} training dataset, and perform attacks on all the training data. By aggregating predictions by \textit{different} attacks~\footnote{\label{D_computation_cost}In principle, we should have also examined whether Attack D misses the record, but due to computation constraint, here we only performed Attack S, P and R.}, we divide the whole training dataset into the following four types of records.

\begin{itemize}[leftmargin=1.25em]
	\item \textbf{All correct:} records that are correctly identified by all the attacks (S, P, R and D) as member on most ($80\%$) target models. (For our Purchase 100 II setup, we identified $\approx$500 such records.)
	\item \textbf{R correct:} records that correctly identified by stronger attack (R) as member on most ($80\%$) target models, while being missed by weaker attacks (S and P) on most ($80\%$) target models. (For our Purchase 100 II setup, we identified $\approx$450 such records.)
    \item \textbf{SP correct:} records that correctly identified by weaker attacks (S and P) as member on most ($80\%$) target models, while being missed by stronger attack (R) on most ($80\%$) target models. (For our Purchase 100 II setup, we identified $\approx$125 such records.)
	\item \textbf{(Baseline) Random:} records randomly sampled from the training dataset (consisting of 5000 records) of the target models. 
\end{itemize}

We then examine \textit{how} and \textit{why} these different types of records incur \textit{different} level of vulnerabilities.

\input{figure_scripts/purchase100_2a_worst_case_neighbouring_loss_histogram}

\paragraph{Improved Ability of Attacks to Identify Vulnerable Records} Are the records identified by Attack R and D more vulnerable than the records identified by Attack S and P? We expect this because Attack R and D better capture the uncertainties associated with individual target data records, and therefore should more precisely estimate the privacy risk of individual target data records. To measure how vulnerable each identified record actually is, we perform the strongest impractical Attack L on leave-one-out models trained with and without this record (while fixing the remaining dataset to be the same as the target models), as explained in the beginning of Section~\ref{ssec:why_vulnerable}. 
The resulting attack performance, when aggregated over a set of records identified by a particular attack, then serves to quantify the ability of this attack to identify vulnerable records.

In Figure~\ref{fig:purchase100_hard_example_fpr_vs_tpr_plot}, we observe that, the \textbf{R correct} records (that are identified by Attack R while being missed by Attack S and P under FPR 0.3) are more vulnerable than \textbf{SP correct} records (that are identified by Attack S and P while being missed by Attack R). Moreover, both \textbf{R correct} records and \textbf{All correct} records are significantly more vulnerable than \textbf{SP correct} records and \textbf{random training data} of the target model. This shows that Attack R has a stronger ability to identify vulnerable data than Attack S and P, in the sense that it not only identifies more records (Table~\ref{tab:agreement_subtable_fpr03} first rows), but also more credibly estimate the vulnerability of individual target data (because R correct records are the most vulnerable in Figure~\ref{fig:purchase100_hard_example_fpr_vs_tpr_plot}).

\paragraph{Why Are Certain Vulnerable Records Missed by Attack S and P} We have shown that the \textbf{R correct} data records, while being extremely vulnerable and successfully detected by Attack R, are \textit{not} identified by Attack S and P as vulnerable. To understand why, we plot the loss histogram for \textbf{R correct} data records in Figure~\ref{fig:purchase100_hard_example_histogram}, and compare it with other types of records.
We observe that \textbf{R correct} records have higher loss values than \textbf{SP correct} and \textbf{All correct} records. Intuitively, this suggests that Attack S and P may fail to identify vulnerable training data that \textit{intrinsically} incur \textit{high} loss. This discovered property of \textbf{R correct} training records, is consistent with the intuition of "hard" examples that are accounted for in recent works~\cite{watson2021importance,carlini2022membership} to design attacks with improved performance. However, we offer a more rigorous way of identifying these worst-case records (as captured by stronger Attack R while being missed by Attack S and P).

\paragraph{Why certain records identified by Attack S and P are invulnerable} In Figure~\ref{fig:purchase100_hard_example_fpr_vs_tpr_plot}, we observe that the "SP
correct records" have the lowest vulnerability under leave-one-out attack. This shows that when a vulnerable record is identified by S and P, it is also nearly always identified by Attack R. Consequently, the remaining SP correct records  (identified by S and P as members while not identified by R
and D) are even less vulnerable than the baseline random record. This low vulnerability of SP correct records reflects mistakes that Attack S and P make in estimating record vulnerability, and shows that Attack S and P are relatively weak (compared to Attack R) in identifying vulnerable records. Consequently, whether a training record is identified by Attack S and P may serve as wrong indication for the vulnerability of this record. 

\paragraph{The Effect of Remaining Training Dataset on Data Vulnerability}

Does the same data record have a different level of vulnerability on target models trained on \textit{different} training datasets? We perform Attack R (with fixed confidence $\alpha$) on a given worst-case vulnerable (R correct) record $z$, and multiple target models trained on randomly sampled population data combined with $z$. We divide all the target models into two sets: the models where $z$ is \textit{correctly predicted} as member by Attack R, i.e. is vulnerable; and the models where $z$ is \textit{incorrectly predicted} as non-member, i.e., is not vulnerable.
By design of Attack R, the loss of record $z$ is smaller on models where it is correctly predicted, as we observe in Figure~\ref{fig:purchase100_worst_case_neighbouring_loss_histogram} (a).

Meanwhile, in Figure~\ref{fig:purchase100_worst_case_neighbouring_loss_histogram} (d), we observe that the loss of the target models on \textit{random training records}, remains roughly the \textit{same} no matter whether the target models are correctly or incorrectly predicted (for membership of $z$). This suggests that \textit{a large fraction} of the training dataset of a target model, may be independent of whether the given record $z$ is vulnerable. However, in Figure~\ref{fig:purchase100_worst_case_neighbouring_loss_histogram} (b) (c), we observe that the loss of the \textit{latent neighbors} of the vulnerable record (in the training dataset of a model), \textit{differs} dramatically between models that are correctly predicted or incorrectly predicted. This says that a target record may be more vulnerable on datasets that contain similar records to itself. This also suggests that latent neighbors of a record may have more visible influence (compared to the whole training dataset) on the vulnerability of this record. We leave it as an interesting open question, whether reducing uncertainties about the target record's latent neighbors yields a strong attack that is more efficient than Attack D (which ``wastefully'' approximates the whole dataset).

\paragraph{Comparison with Concurrent Work}

\label{ssec:comparison_with_carlini}

Even though our main goal in this paper is not to show that we have better heuristics to design stronger attacks, we show that our systematic methodology that resulted in attack R and D indeed is more powerful than the latest results~\cite{carlini2022membership}. Thanks to the code released by authors, we compared with their online and offline Likelihood Ratio Attack (LiRA) on the CIFAR10 dataset and Wide Resnet (with depth 28 and width 2) target models trained on 25000 data points in Figure~\ref{fig:related_work_carlini}. 

The LiRA attack~\cite{carlini2022membership} is similar to our Attack R, which exploits the dependency of
vulnerability on different records. However, due to our more precise
characterization of threshold dependency, Attack R achieves higher
performance than LiRA~\cite{carlini2022membership} in terms of AUC score (Figure~\ref{fig:related_work_carlini}), while achieving similarly high TPR at small fixed FPR. We do this also with a lower computation cost in terms of training reference models. \cite{carlini2022membership} needs $n$ OUT models for attacking all points, and $n$ IN models for attacking each point. We
only need $2\cdot n$ OUT models. So, for attacking $k$ points, their amortized cost is $(k+1)/2$ times more than our cost. Finally, our attack D additionally captures the influence of (unknown) target dataset on the privacy risk of a target instance, which LiRA~\cite{carlini2022membership} does not consider. Consequently, our Attack D achieves a better AUC
score than LiRA~\cite{carlini2022membership} (Figure~\ref{fig:related_work_carlini}), as well as a lower FPR at fixed given high TPR.

\section{Conclusions}
We provide a framework for auditing the privacy risk of a machine learning model about individual data records, through membership inference attacks that can guarantee FPR confidence over fine-grained (non-member) out worlds. Within this framework, we derive increasingly strong attacks, which highlight various uncertainties of the attacker that limit attack performance. 
\begin{acks}
The authors would like to thank Hongyan Chang and anonymous reviewers for helpful discussions on drafts of this paper. This research is supported by Google PDPO faculty research award,  Intel within the www.private-ai.org center, Meta faculty research award,  the NUS Early Career Research Award (NUS ECRA award number NUS ECRA FY19 P16), and the National Research Foundation, Singapore under its Strategic Capability Research Centres Funding Initiative. Any opinions, findings and conclusions or recommendations expressed in this material are those of the author(s) and do not reflect the views of National Research Foundation, Singapore.
\end{acks}



\bibliographystyle{ACM-Reference-Format}
\bibliography{reference}

\appendix
\section{Detailed derivation of approximated LRT for membership inference}
\label{appendix:lrt}

We first prove two useful approximation inequalities about the posterior distribution $P(\theta|D)$ as follows. 
\begin{enumerate}[leftmargin=2em]
    \item For arbitrary data point $z=(z_x,z_y)$, and arbitrary dataset $D$, we have
    \begin{equation}
        \label{eqn:approx_add_point}
        P(\theta|D\cup z) \geq e^{-\frac{\ell(\theta,z_x,z_y)}{T}}\cdot P(\theta|D).
    \end{equation} 
    \item Let $z_1,z_2,\cdots,z_n$ be i.i.d. samples from the data distribution $\pi(z)$. Then when $n$ is large enough, for any model parameter $\theta$, we have
    \begin{equation}
        \label{eqn:approx_expect_model_dist}
        \mathbf{E}_{z_1,\cdots,z_n\sim\pi}[P(\theta|z_1,\cdots,z_n)]\approx\mathbf{E}_{z_1,\cdots,z_{n-1}\sim\pi}[P(\theta|z_1,\cdots,z_{n-1})]
    \end{equation}
\end{enumerate}

\textbf{Proof Sketch:}
\begin{enumerate}
    \item By \eqref{eqn:train_posterior}, we have
    \begin{align*}
        P(\theta|D\cup z)
        = & \frac{e^{-\frac{1}{T}\ell(\theta,x_z,y_z)-\frac{1}{T}\sum_{(x,y)\in D}\ell(\theta,x,y)}}{\int e^{-\frac{1}{T}\ell(\theta,x_z,y_z)-\frac{1}{T}\sum_{(x,y)\in D}\ell(\theta,x,y)}d\theta}\\
        = & e^{-\frac{1}{T}\ell(\theta,x_z,y_z)}
        \cdot \frac{
            e^{-\frac{1}{T}\sum_{(x,y)\in D}\ell(\theta,x,y)}
            }{
                \int e^{-\frac{1}{T}\ell(\theta,x_z,y_z)-\frac{1}{T}\sum_{(x,y)\in D}\ell(\theta,x,y)}d\theta
            }
    \end{align*}
    By $\ell(\theta,x_z,y_z)\geq 0$, we further prove 
    \begin{align}
        P(\theta|D\cup z) & \geq e^{-\frac{1}{T}\ell(\theta,x_z,y_z)}\cdot \frac{e^{-\frac{1}{T}\sum_{(x,y)\in D}\ell(\theta,x,y)}}{\int e^{-\frac{1}{T}\sum_{(x,y)\in D}\ell(\theta,x,y)}d\theta}\\
        & = e^{-\frac{1}{T}\ell(\theta,x_z,y_z)}\cdot P(\theta|D)\quad (\text{By  \eqref{eqn:train_posterior}})
    \end{align}
    \item This is ensured by the convergence of the posterior distribution for trained model $\theta$ given large number of training data samples $z_1,\cdots,z_n$, as $n\rightarrow \infty$.
\end{enumerate}

We now offer details for deriving the approximated likelihood ratio test (LRT) for membership inference.

\begin{lemma}[Approximated LRT for membership inference]
    Let $(\theta, z)$ be random samples from the joint distribution of target model and target data point, specified by one of the following membership hypotheses.
    \begin{align}
        \label{eqn:LRT_hyposis_app}
        H_0:\ & D\xleftarrow{n\ i.i.d. samples}\pi(z), \theta\xleftarrow{sample}\mathcal{T}(D), z\xleftarrow{sample} \pi(z)\\
        H_1:\ & D\xleftarrow{n\ i.i.d. samples}\pi(z), \theta\xleftarrow{sample}\mathcal{T}(D), z\xleftarrow{sample} D
    \end{align}
    Then the Likelihood Ratio Test (LRT) could be approximately written as follows.
    \begin{equation}
        \label{eqn:lrt_strategy_app}
        \text{If } \ell(\theta,x_z,y_z)\leq c, \text{ reject }H_0,
    \end{equation}
    where $c$ could be an arbitrary threshold (that is constant across all $\theta$ and $z=(x_z,y_z)$). 
\end{lemma}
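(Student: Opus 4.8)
The plan is to compute the likelihood ratio $\Lambda(\theta,z)=P_{H_1}(\theta,z)/P_{H_0}(\theta,z)$ explicitly, and to show that under the two approximations established above it collapses (up to a $\theta$-independent factor) to $e^{-\ell(\theta,x_z,y_z)/T}$, so that thresholding $\Lambda$ is equivalent to thresholding the loss.

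First I would write down the two joint densities. Under $H_0$ the target record $z$ is drawn independently of the training set, so the joint density factors as $P_{H_0}(\theta,z)=\pi(z)\cdot \mathbf{E}_{z_1,\dots,z_n\sim\pi}[P(\theta\mid z_1,\dots,z_n)]$. Under $H_1$ the record $z$ is one of the $n$ training points; using the exchangeability of the i.i.d.\ draws to fix (say) $z_1=z$ and marginalize the remaining samples, the joint density becomes $P_{H_1}(\theta,z)=\pi(z)\cdot \mathbf{E}_{z_2,\dots,z_n\sim\pi}[P(\theta\mid z,z_2,\dots,z_n)]$. The population factor $\pi(z)$ is common to both hypotheses, so it cancels in the ratio and we are left with a ratio of two expected posteriors.

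The core of the argument is then two substitutions. Applying inequality \eqref{eqn:approx_add_point} pointwise in the remaining $n-1$ samples (with $D=\{z_2,\dots,z_n\}$) and taking expectations pulls the factor $e^{-\ell(\theta,x_z,y_z)/T}$ out of the numerator, leaving $\mathbf{E}_{z_2,\dots,z_n}[P(\theta\mid z_2,\dots,z_n)]$. Applying the large-sample approximation \eqref{eqn:approx_expect_model_dist} identifies this $(n-1)$-sample expected posterior with the $n$-sample expected posterior in the denominator, so the two cancel and yield $\Lambda(\theta,z)\approx e^{-\ell(\theta,x_z,y_z)/T}$. Since $t\mapsto e^{-t/T}$ is strictly decreasing, the Neyman--Pearson rule ``reject $H_0$ when $\Lambda>\tau$'' is equivalent to ``reject $H_0$ when $\ell(\theta,x_z,y_z)\le c$'' with $c=-T\log\tau$, which is exactly \eqref{eqn:lrt_strategy_app}; crucially $c$ inherits no dependence on $\theta$ or $z$.

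The main obstacle is that \eqref{eqn:approx_add_point} is only a one-sided bound: unwinding the normalizers gives $P(\theta\mid D\cup z)=e^{-\ell(\theta,x_z,y_z)/T}\,P(\theta\mid D)\cdot Z(D)/Z(D\cup z)$, where the partition-function ratio $Z(D)/Z(D\cup z)=1/\mathbf{E}_{\theta'\sim P(\cdot\mid D)}[e^{-\ell(\theta',x_z,y_z)/T}]$ is at least $1$ but in general not equal to $1$. To upgrade the inequality to the claimed approximate equality I would argue that this ratio is constant in $\theta$ and, in the large-$n$ regime where the posterior concentrates, varies negligibly with the particular draw of $D$ and with $z$, so that it is absorbed into the threshold without inducing a $z$-dependence in $c$. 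This is the same single-point-insensitivity phenomenon that underlies \eqref{eqn:approx_expect_model_dist}, and it is precisely where the ``large enough dataset'' hypothesis --- and the lemma's explicit non-optimality caveat --- are being spent.
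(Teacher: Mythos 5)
Your proposal is correct and follows essentially the same route as the paper's Appendix~A proof: the same factorization of the two joint likelihoods, the same two approximation facts (the Gibbs-posterior inequality for adding a point, and the large-$n$ insensitivity of the expected posterior to one sample), and the same monotonicity step converting the likelihood-ratio threshold into a constant loss threshold. The only differences are cosmetic (you threshold $L(H_1)/L(H_0)$ from above rather than $L(H_0)/L(H_1)$ from below), and your closing discussion of the partition-function ratio $Z(D)/Z(D\cup z)$ honestly flags the one-sided-bound-to-approximate-equality gap that the paper's own derivation passes over silently when it moves from ``$\leq$'' to ``$\approx$''.
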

\begin{proof}
    
    The likelihoods function of hypothesis $H_0$ and $H_1$, given observed target model $\theta$ and target data point $z$, is as follows.
    \begin{align}
        \label{eqn:H0_joint_dist_app}
        L(H_0|\theta,z)&=P_{H_0}(\theta,z) =\pi(z)\cdot \mathbf{E}_{D\sim\pi^n} [P(\theta|D)]\\
        L(H_1|\theta,z)&=P_{H_1}(\theta,z) = \sum_{D}P_{H_1}(D,\theta,z) \\
        &= \sum_{D}\pi(z)\cdot P_{H_1}(D|z)\cdot P(\theta|D)\\
        &= \pi(z)\cdot\mathbf{E}_{D'\sim\pi^{n-1}}[P(\theta|D'\cup z)]
    \end{align}
    By using Bayes approximation of the (posterior) distribution of trained model on private dataset, we have
    \begin{equation}
        \label{eqn:train_posterior}
        P(\theta|D)\approx\frac{
            e^{-\frac{1}{T}\sum_{(x,y)\in D}\ell(\theta,x,y)}
            }{
            \int e^{-\frac{1}{T}\sum_{(x,y)\in D}\ell(\theta,x,y)} d\theta
        },
    \end{equation}
    where $T$ is a temperature constant specified by the training algorithm $\mathcal{T}$. This approximation holds for many Bayesian learning algorithms, such as stochastic gradient descent~\cite{polyak1992acceleration}. Intuitively, for $T\rightarrow 0$, \eqref{eqn:train_posterior} becomes concentrated around the optimum of the loss function, and recovers deterministic MAP (Maximum A Posteriori) inference. For $T=1$, \eqref{eqn:train_posterior} captures Bayesian posterior sampling under uniform prior and loss-based log-likelihood~\cite{welling2011bayesian}.
    Therefore, by plugging \eqref{eqn:train_posterior} and \eqref{eqn:approx_add_point} into \eqref{eqn:H0_joint_dist_app}, we prove that
    \begin{align}
        L(H_1|\theta,z) &\geq \pi(z)\cdot e^{-\frac{1}{T}\ell(\theta,z_x,z_y)}\cdot \mathbf{E}_{D'\sim\pi^{n-1}}[P(\theta|D')]\\
        \text{(By \eqref{eqn:approx_expect_model_dist}) } &\approx\pi(z)\cdot e^{-\frac{1}{T}\ell(\theta,z_x,z_y)}\cdot \mathbf{E}_{D\sim\pi^{n}}[P(\theta|D)]
    \end{align}
    Therefore the LRT statistics is 
    \begin{align}
        LR(\theta,z) & =\frac{L(H_0|\theta,z)}{L(H_1|\theta,z)}\leq e^{\frac{1}{T}\ell(\theta,z_x,z_y)} \frac{\mathbf{E}_{D\sim\pi^n} [P(\theta|D)]}{\mathbf{E}_{D'\sim\pi^{n-1}} [P(\theta|D')]} \nonumber \\
        & \approx e^{\frac{1}{T}\ell(\theta,z_x,z_y)} \label{eqn:lrt_statistics_app}
    \end{align}
    The LRT hypothesis test rejects $H_0$ when the LRT statistic is small. By \eqref{eqn:lrt_statistics_app}, the rejection region $\{(\theta,z):\lambda(\theta,z)\leq c\}$ can be approximated as follows.
    \begin{equation}
        \label{eqn:rej_region_app}
        \Big\{(\theta,z):\ell(\theta,z_x,z_y)\leq T\cdot \log c\Big\}
    \end{equation}
\end{proof}
\input{figure_scripts/purchase100_2a_loss_threshold_histogram_for_class.tex}
\section{Experimental Setup}
\label{appendix:experimental_setup}
\subsection{Details about Target Models}

All training is done using categorical cross entropy loss function. Below we list the details for training the target model (sampled without replacement).

\txtbullet Purchase 100 setup I: 4 layer MLP model with layer units = [512, 256, 128, 64], SGD optimizer algorithm, trained on 2500 data points (sampled without replacement).

\txtbullet Purchase 100 setup II, III: 4 layer MLP model with layer units = [512, 256, 128, 64], SGD optimizer algorithm, trained on 10000 data points (sampled without replacement).

\txtbullet Purchase 100 setup IV: 4 layer MLP model with layer units = [512, 256, 128, 64], SGD with additional gradient clipping with $\ell_2$ norm 2.0, trained on 10000 data points (sampled without replacement).

\txtbullet CIFAR10 Setup I: AlexNet model, Adam optimizer, trained on 2500 data points (sampled without replacement).

\txtbullet CIFAR10 Setup II: AlexNet model, Adam optimizer, trained on 5000 data points (sampled without replacement).

\txtbullet CIFAR10 Setup III: 3-block VGGNet, SGD with momentum and an L2 regularization penalty $\lambda = 0.001$, trained on 10000 data points (sampled without replacement).

\txtbullet CIFAR10 Setup IV: Wide Resnet with depth 28 and width 2, Adam optimizer, trained on 25000 data points in expectation (Poisson sampling).

\txtbullet CIFAR 100 and MNIST setup I: 2 layer CNN with filters = [32, 64] and max pooling, SGD optimizer, trained on 2500 data points (sampled without replacement).

\txtbullet CIFAR 100 and MNIST setup II: 2 layer CNN with filters = [32, 64] and max pooling, SGD optimizer, trained on 5000 data points (sampled without replacement).

\subsection{Details about training shadow, reference or distilled model and population records}

For each configuration of target model, we train shadow, reference, and distilled models on random i.i.d. sub-splits of the data (possibly overlapping). All shadow, reference or distilled models are trained using the same model structure, training algorithm and dataset size as the target model. Below we list the number of models or number of population data points we use in each setup, as well as the distillation algorithm.

\txtbullet Purchase 100 setup I, II and III: 999 shadow, reference or distilled models for attacking each target model. We sample 1000 data points from each class for launching Attack P.

\txtbullet Purchase 100 setup IV: 29 shadow, reference or distilled models for attacking each target model. We sample 1000 data points from each class for launching Attack P.

\txtbullet CIFAR10 setup I, II:  399 shadow, reference or distilled models for attacking each target model. We sample 400 data points from each class for launching Attack P.

\txtbullet CIFAR10 setup III:  29 shadow, reference or distilled models for attacking each target model. We sample 400 data points from each class for launching Attack P.

\txtbullet CIFAR10 setup IV:  15 shadow, reference or distilled models for attacking each target model. We do not luanch Attack P for this setup.

\txtbullet CIFAR 100: We sample 400 data points from each class for launching Attack P.

\txtbullet MNIST: We sample 1000 data points from each class for launching Attack P.

{\renewcommand\normalsize{\small}%
\normalsize
\begin{table*}
    \centering
    \caption{AUC Scores of all attacks on Purchase100 Dataset. Configuration I is trained on 2500 data points, configuration II is trained on 5000 data points, and configurations III and IV are trained on 10000 data points. Configuration IV has been trained with a gradient clipping norm of 2.0. Configurations I, II, and III use $n=999$ (shadow, reference, distilled) models, whereas configuration IV uses $n=29$ (shadow, reference, distilled) models.}
    \begin{tabular}{|l|l|l|l|l|l|l|}
        \hline
        & \textbf{Train Acc.} & \textbf{Test Acc.} & \textbf{Attack S}          & \textbf{Attack P}          & \textbf{Attack R}          & \textbf{Attack D}          \\
        \hline
        I & 92.36 $\pm$ 0.109 & 49.9 $\pm$ 0.056 & \textbf{0.8} $\pm$ 0.049 & \textbf{0.818} $\pm$ 0.038 & \textbf{0.825} $\pm$ 0.058
        & \textbf{0.837} $\pm$ 0.061
        \\
        \hline
        II & 99.5 $\pm$ 0.004 & 65.4 $\pm$ 0.009 & \textbf{0.754} $\pm$ 0.008 & \textbf{0.756} $\pm$ 0.006 & \textbf{0.808} $\pm$ 0.009
        & \textbf{0.831} $\pm$ 0.004
        \\
        \hline
        III & 100.0 $\pm$ 0.0 & 75.5 $\pm$ 0.004 & \textbf{0.687} $\pm$ 0.003 & \textbf{0.687} $\pm$ 0.003 & \textbf{0.755} $\pm$ 0.004
        & \textbf{0.768} $\pm$ 0.002
        \\
        \hline
        IV & 95.74 $\pm$ 0.01 & 71.71 $\pm$ 0.009 & \textbf{0.649} $\pm$ 0.004 & \textbf{0.656} $\pm$ 0.005 & \textbf{0.701} $\pm$ 0.009
        & \textbf{0.717} $\pm$ 0.006
        \\
        \hline
    \end{tabular}
    \label{tab:purchase100_auc_scores}
\end{table*}
}

{\renewcommand\normalsize{\small}%
\normalsize
\begin{table*}[ht!]
    \centering
    \caption{AUC Scores of all attacks on CIFAR10 Dataset. Configuration I is trained on 2500 data points, and configuration II is trained on 5000 data points. Configurations I and II are trained using AlexNet. Configuration III is trained on 10000 data points using a 3-block VGGNet. Here we report the results of the attacks for Configurations I and II using $n=399$ (shadow, distilled, reference) models, and $n=29$ for Configuration III. Models in setups I and II have been trained using the Adam optimizer, whereas models in setup III have been trained using the SGD optimizer with momentum and an L2 regularization penalty $\lambda = 0.001$.}
    \begin{tabular}{|l|l|l|l|l|l|l|}
        \hline
        & \textbf{Train Acc.} & \textbf{Test Acc.} & \textbf{Attack S} & \textbf{Attack P} & \textbf{Attack R} & \textbf{Attack D} \\
        \hline
        I & 96.2 $\pm$ 0.046 & 40.9 $\pm$ 0.029 & \textbf{0.870} $\pm$ 0.018 & \textbf{0.857} $\pm$ 0.023 & \textbf{0.874} $\pm$ 0.018
        & \textbf{0.871} $\pm$ 0.007
        \\
        \hline
        II & 97.8 $\pm$ 0.012 & 45.9 $\pm$ 0.010 & \textbf{0.860} $\pm$ 0.014 & \textbf{0.868} $\pm$ 0.008 & \textbf{0.858} $\pm$ 0.019
        & \textbf{0.889} $\pm$ 0.011
        \\
        \hline
        III & 97.4 $\pm$ 0.004 & 68.2 $\pm$ 0.011 & \textbf{0.707} $\pm$ 0.011 & \textbf{0.709} $\pm$ 0.01 & \textbf{0.792} $\pm$ 0.013
        & \textbf{0.806} $\pm$ 0.003
        \\
        \hline
        IV & 99.89 $\pm$ 0.007 & 77.37 $\pm$ 0.014 & - & - & \textbf{0.697} $\pm$ 0.011
        & \textbf{0.704} $\pm$ 0.009
        \\
        \hline
    \end{tabular}
    \label{tab:cifar10_auc_scores}
\end{table*}
}

{\renewcommand\normalsize{\small}%
\normalsize
\begin{table*}[ht!]
    \centering
    \caption{AUC Scores of all attacks on CIFAR100 Dataset. Configuration I is trained on 2500 data points, and Configuration II is trained on 5000 data points.}
    \begin{tabular}{|l|l|l|l|l|l|l|}
        \hline
        & \textbf{Train Acc.} & \textbf{Test Acc.} & \textbf{Attack S} & \textbf{Attack P} & \textbf{Attack R} & \textbf{Attack D} \\
        \hline
        I & 97.4 $\pm$ 0.013 & 14.9 $\pm$ 0.009 & \textbf{0.959} $\pm$ 0.005 & \textbf{0.960} $\pm$ 0.004 & \textbf{0.964} $\pm$ 0.006
        & \textbf{0.957} $\pm$ 0.0003
        \\
        \hline
        II & 97.9 $\pm$ 0.006 & 20.4 $\pm$ 0.006 & \textbf{0.944} $\pm$ 0.004 & \textbf{0.945} $\pm$ 0.003 & \textbf{0.945} $\pm$ 0.006
        & \textbf{0.936} $\pm$ 0.0
        \\
        \hline
    \end{tabular}
    \label{tab:cifar100_auc_scores}
\end{table*}
}

\subsection{Details for smoothing the loss distributions and computing continuous percentiles}

\label{ssec:smoothing}

For constructing the attacks in each setup (i.e., for computing the attack threshold), we compute $\alpha$-percentiles of smoothed loss distribution rather than the discrete histogram (over shadow models, population records, reference models or distilled models). This is to support computing attack threshold under requirement of small FPR $\alpha<\frac{1}{N+1}$, where $N+1$ is the number of samples in the loss histogram. We experiment with the following four smoothing techniques for each setup, and report the method that gives the best attack performance.

\begin{enumerate}
    \item Linear interpolation of percentile: given a discrete histogram with $N + 1$ loss values $l_0\leq \cdots \leq l_N$, then linear interpolation computes $\alpha$-percentile $p(\alpha)$ for any $0\leq \alpha\leq 1$ as follows.
    
    \begin{align*}
        p(\alpha) = & l_{\lfloor\alpha \cdot N\rfloor}\cdot (\lfloor\alpha \cdot N\rfloor + 1 - \alpha \cdot N) \\
        & + \left(\alpha \cdot N - \lfloor\alpha \cdot N\rfloor\right)\cdot l_{\lfloor\alpha \cdot N\rfloor + 1}
    \end{align*}
    \item Logit rescaling method~\cite{carlini2022membership}: given a discrete histogram with $N + 1$ loss values $l_0\leq \cdots \leq l_N$, we fit a gaussian distribution $\mathcal{N}(\mu, \sigma^2)$ on the following values
    
    \begin{align*}
        \{\log\frac{e^{-l_0}}{1 - e^{-\ell_0}}, \log\frac{e^{-l_1}}{1 - e^{-\ell_1}} \cdots, \log\frac{e^{-l_N}}{1 - e^{-\ell_N}}\},
    \end{align*}

    which are values derived by applying logit rescaling transformation on the loss values. Then, the smoothed alpha-percentile $p(\alpha)$ gives the $1-\alpha$-percentile for the above Gaussian distribution, after the logit rescaling transformation. That is
    \begin{align*}
        \log\frac{e^{- p(\alpha)}}{1- e^{-p(\alpha)}} = (1-\alpha)\text{-percentile of }\mathcal{N}(\mu, \sigma^2)
    \end{align*}

    Solution to this equation gives the percentile $p(\alpha)$ for the smoothed loss distribution.
    
    \item Minimum of threshold derived by linear interpolation method and Logit rescaling method~\cite{carlini2022membership}: given a discrete histogram with $N + 1$ loss values $l_0\leq \cdots \leq l_N$, suppose that linear interpolation method gives an estimate $p^{linear}(\alpha)$ for the $\alpha$-percentile, while logit rescaling method gives an estimate $p^{logit}(\alpha)$ for the $\alpha$-percentile. Then this method computes the $\alpha$-percentile estimate $p(\alpha)$ as the minimum of two thresholds. That is
    
    \begin{align*}
        p(\alpha) = \min\left(p^{linear}(\alpha), p^{logit}(\alpha)\right)
    \end{align*}

    This means we are being conservative about inferring a target as member, i.e., only when both smoothing methods give FPR estimate at $\alpha$, we then infer the target as member.
    \item Average of confidence derived by linear interpolation method and Logit rescaling method~\cite{carlini2022membership}: given a discrete histogram with $N + 1$ loss values $l_0\leq \cdots \leq l_N$, suppose that linear interpolation method gives a smoothed CDF $F^{linear}(\ell): \mathbb{R}\rightarrow [0, 1]$ to estimate the histogram, while logit rescaling method gives a smoothed CDF $F^{logit}(\ell): \mathbb{R}\rightarrow [0,1]$ for the histogram. Then this method computes the confidence estimate $F(\ell)$ for the target's loss value $\ell$ as the average of confidence given by two smoothing techniques. That is
    
    \begin{align*}
        F(\ell) = \frac{1}{2}\left(F^{linear}(\ell) + F^{logit}(\ell)\right)
    \end{align*}

    The inverse $F^{-1}: [0,1]\rightarrow [R]$ of this (monotonic) CDF function then serve to estimate an attack threshold for any given $\alpha\in [0,1]$. This intuitively means that we are weighting the two smoothing methods equally in terms of their decision to inferring a target as member or not.
\end{enumerate}

\section{Additional Empirical Results}
\subsection{Illustrations of Attack Thresholds on Different Targets}

\label{sec:additional_loss_hist}

In this section, we visualize the distribution of the attack thresholds for Attacks S, P, R, and D. We report the thresholds used by different attacks for Purchase100 Dataset experimental setup II over 10 randomly chosen target models (3 target models for Attack D) and their train or test data records in one class. This is to show how different are the attack thresholds in Attack S, P, R and D on different random target models and target data records, and to complement Figure~\ref{fig:purchase100_2a_all_attack_loss_dist_plots_new} (which only shows loss thresholds on four pairs of target model and target record). Observe that Attack S uses the same threshold for all the considered targets, and Attack P also uses highly similar thresholds for all the considered targets. On the contrary, Attack R and D use more diverse thresholds for adapting to different target models and target records. This is consistent with Section~\ref{ssec:summary_attack_explain} and show the different threshold dependency of Attack S, P, R and D. 
\label{appendix:additional_experiments}

In this section, we report the result of our attacks (Attack S, P, R and D) on models trained in different setups and more datasets. We show that a general trend of reduced uncertainty and increased attack performance when we go from Attack S and P to Attack R and D, across different training algorithm and datasets.

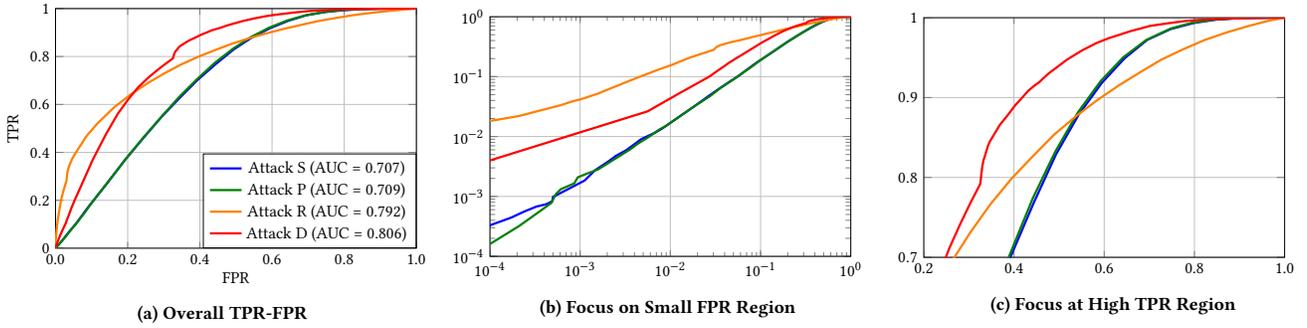
\begin{figure*}[h]
    \centering
    {\tikzset{external/export=false}
    \begin{subfigure}{.33\textwidth}
        \begin{tikzpicture}[scale=0.7]
            \begin{axis}
               [name=cifar10vggplot,
                    xlabel={FPR}, ylabel={TPR},
                    xmin = 0, xmax = 1,
                    ymin = 0, ymax = 1, yscale=0.8,
                    xtick={0.0,0.2,0.4,0.6,0.8,1.0}, xticklabels={0.0,0.2,0.4,0.6,0.8,1.0},
                    grid = major, title style={yshift=0.9cm},
                    legend style={at={(1.0, 0.0)},anchor=south east}]
                        \addplot[solid, very thick, blue, no marks] table[skip first n=1,x index=1, y index=2, col sep=comma] {"data/tpr_vs_fpr_curves/cifar10_vgg_ccs/attack_S_linear_itp_all_models_tpr_vs_fpr_data.csv"};
                        \addplot[solid, very thick, green!50!black, no marks] table[skip first n=1,x index=1, y index=2, col sep=comma] {"data/tpr_vs_fpr_curves/cifar10_vgg_ccs/attack_P_linear_itp_all_models_tpr_vs_fpr_data.csv"};
                        \addplot[solid, very thick, orange, no marks] table[skip first n=1,x index=1, y index=2, col sep=comma] {"data/tpr_vs_fpr_curves/cifar10_vgg_ccs/attack_R_linear_itp_all_models_tpr_vs_fpr_data.csv"};
                    \addplot[solid, very thick, red, no marks] table[skip first n=1,x index=1, y index=2, col sep=comma] {"data/tpr_vs_fpr_curves/cifar10_vgg_ccs/attack_D_linear_itp_all_models_tpr_vs_fpr_data.csv"};
               \legend{Attack S (AUC = 0.707), Attack P (AUC = 0.709), Attack R (AUC = 0.792), Attack D (AUC = 0.806)}
            \end{axis}
        \end{tikzpicture}
        \caption{Overall TPR-FPR}
    \end{subfigure}\hfill
    \begin{subfigure}{.33\textwidth}
        \begin{tikzpicture}[scale=0.7]
            \begin{axis}
               [name=cifar10vggplot_low_FPR,
                   ymode = log,
                   ymin = 0.0001, ymax = 1, yscale=0.8,
                   xmode = log,
                   xmin = 0.0001, xmax = 1,
                   xtick={0.0001,0.001,0.01,0.1,1.0}, xticklabels={$10^{-4}$,$10^{-3}$,$10^{-2}$,$10^{-1}$,$10^{0}$},
                   grid = major, title style={yshift= 0.9cm},
                   legend style={at={(1.0, -0.25)},anchor=south east}]
                       \addplot[solid, very thick, blue, no marks] table[skip first n=1,x index=1, y index=2, col sep=comma] {"data/tpr_vs_fpr_curves/cifar10_vgg_ccs/attack_S_linear_itp_all_models_tpr_vs_fpr_data.csv"};
                      \addplot[solid, very thick, green!50!black, no marks] table[skip first n=1,x index=1, y index=2, col sep=comma] {"data/tpr_vs_fpr_curves/cifar10_vgg_ccs/attack_P_linear_itp_all_models_tpr_vs_fpr_data.csv"};
                      \addplot[solid, very thick, orange, no marks] table[skip first n=1,x index=1, y index=2, col sep=comma] {"data/tpr_vs_fpr_curves/cifar10_vgg_ccs/attack_R_linear_itp_all_models_tpr_vs_fpr_data.csv"};
                    \addplot[solid, very thick, red, no marks] table[skip first n=1,x index=1, y index=2, col sep=comma] {"data/tpr_vs_fpr_curves/cifar10_vgg_ccs/attack_D_linear_itp_all_models_tpr_vs_fpr_data.csv"};
            \end{axis}
        \end{tikzpicture}
        \caption[Network]{Focus on Small FPR Region}
    \end{subfigure}\hfill
    \begin{subfigure}{.33\textwidth}
        \begin{tikzpicture}[scale=0.7]
            \begin{axis}
               [name=cifar10vggplot_high_TPR,
                    xmin = 0.2, xmax = 1,
                    ymin = 0.7, ymax = 1, yscale=0.8,
                    xtick={0.0,0.2,0.4,0.6,0.8,1.0}, xticklabels={0.0,0.2,0.4,0.6,0.8,1.0},
                    grid = major, title style={yshift=0.9cm},
                    legend style={at={(1.0, 0.58)},anchor=south east}]
                       \addplot[solid, very thick, blue, no marks] table[x=x, y=y, col sep=comma] {"data/tpr_vs_fpr_curves/cifar10_vgg_ccs/attack_S_linear_itp_all_models_tpr_vs_fpr_data.csv"};
                      \addplot[solid, very thick, green!50!black, no marks] table[skip first n=1,x index=1, y index=2, col sep=comma] {"data/tpr_vs_fpr_curves/cifar10_vgg_ccs/attack_P_linear_itp_all_models_tpr_vs_fpr_data.csv"};
                      \addplot[solid, very thick, orange, no marks] table[skip first n=1,x index=1, y index=2, col sep=comma] {"data/tpr_vs_fpr_curves/cifar10_vgg_ccs/attack_R_linear_itp_all_models_tpr_vs_fpr_data.csv"};
                    \addplot[solid, very thick, red, no marks] table[skip first n=1,x index=1, y index=2, col sep=comma] {"data/tpr_vs_fpr_curves/cifar10_vgg_ccs/attack_D_linear_itp_all_models_tpr_vs_fpr_data.csv"};
            \end{axis}
        \end{tikzpicture}
        \caption{Focus at High TPR Region}
    \end{subfigure}
    }
    \caption{FPR vs TPR with AUC scores for all attacks on CIFAR10 Dataset experimental setup III (details in Appendix~\ref{appendix:experimental_setup}). Attack S, R and D use 29 shadow, reference and distilled models respectively, and Attack P uses 400 population data points (per-class). The evaluated TPR and FPR are averaged over member and non-member target data of 10 target models for Attack S, P and R. For Attack D we only average TPR and FPR over 3 target models due to limited computational resources. Attack D has the highest performance in terms of AUC, followed closely by Attack R. Because there are few samples in the loss histogram constructed by Attack P (400), R and D (29 each), to obtain attack threshold for small $\alpha<0.001$, we use linear interpolation smoothing method for the loss histograms (detailed descriptions are in Section~\ref{ssec:smoothing}).}
    \label{fig:cifar10_vgg_fpr_vs_tpr_plots}
\end{figure*}

\input{figure_scripts/cifar10_resnet_with_related_work_fpr_vs_tpr_plots_ccs.tex}

\subsection{Additional Empirical Results for Evaluating Attack Performance}

\paragraph{Additional Attack Performance Results for Purchase100}

We report attack results on different training configurations on the Purchase100 dataset. Table~\ref{tab:purchase100_auc_scores} shows a general overview of the AUC scores of different attacks on different setups. We vary the size of the training dataset among different setups, to show that although privacy risk decreases as the training dataset size increases, the general trend of increased attack performance remains, when we compare Attack S, P, R and D.

\paragraph{Additional Attack Performance Results for CIFAR10}

We then turn to evaluate attacks on the CIFAR10 dataset. We report the overall AUC scores of attacks under different CIFAR10 setups in Table~\ref{tab:cifar10_auc_scores}. We also provide detailed TPR-FPR curves the CIFAR experiment setup III in Figure~\ref{fig:cifar10_vgg_fpr_vs_tpr_plots}. The overall trend of performance (AUC score and TPR-FPR tradeoff) of different Attacks remains similar to that on Purchase100 dataset, although the performances are lower.

\paragraph{Additional Attack Performance Results for  CIFAR100 and MNIST}

We evaluate the performance of attacks on more datasets: CIFAR100 and MNIST. We report the AUC scores of attacks under different setups in Table~\ref{tab:cifar100_auc_scores} and Table~\ref{tab:mnist_auc_scores}. Observe that the attack performance gaps are much smaller than Purchase100 and CIFAR10. (That is, all the attacks are successful on CIFAR100, while all the attacks perform poorly on MNIST.) However, we still observe that Attack R and D to slightly outperform Attack S and P in 3 of the 4 setups.

{\renewcommand\normalsize{\small}%
\normalsize
\begin{table*}[ht!]
    \centering
    \caption{AUC Scores of all attacks on MNIST Dataset. Configuration I is trained on 2500 data points, and Configuration II is trained on 5000 data points.}
    \begin{tabular}{|l|l|l|l|l|l|l|}
        \hline
        & \textbf{Train Acc.} & \textbf{Test Acc.} & \textbf{Attack S} & \textbf{Attack P} & \textbf{Attack R} & \textbf{Attack D} \\
        \hline
        I & 97.9 $\pm$ 0.004 & 95.8 $\pm$ 0.007 & \textbf{0.50} $\pm$ 0.004  & \textbf{0.50} $\pm$ 0.005  & \textbf{0.557} $\pm$ 0.009
        & \textbf{0.549} $\pm$ 0.006
        \\
        \hline
        II & 98.6 $\pm$ 0.001 & 97.1 $\pm$ 0.002 & \textbf{0.496} $\pm$ 0.005 & \textbf{0.496} $\pm$ 0.006 & \textbf{0.551} $\pm$ 0.011
        & \textbf{0.544} $\pm$ 0.004
        \\
        \hline
    \end{tabular}
    \label{tab:mnist_auc_scores}
\end{table*}
}

\begin{figure*}[h!]
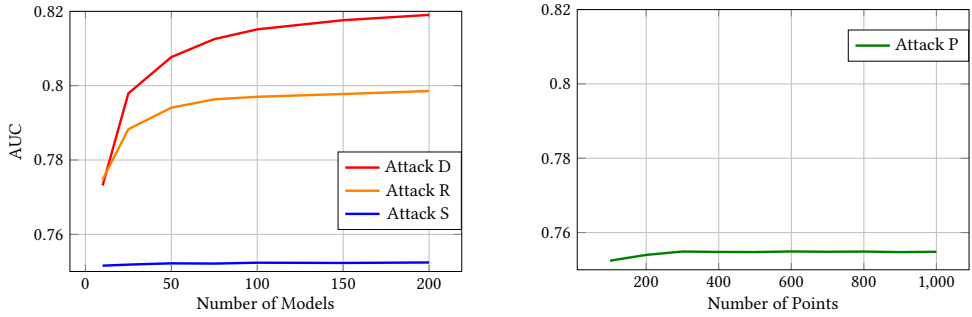

    \begin{subfigure}[b]{.45\textwidth}
        \raggedleft
    \if\compileFigures1
    \begin{tikzpicture}[scale=0.76]
    \begin{axis}[
       xlabel={Number of Models}, ylabel={AUC},
       ymin = 0.75, ymax = 0.82, yscale=0.8,
       grid = major,
       legend style={at={(1.0, 0.2)},anchor=south east}]
			\addplot[solid, very thick, red, no marks] table[skip first n=1,x index=1, y index=2, col sep=comma] {"data/effect_of_hyperparams/purchase100_2a/attack_D_auc_vs_num_models_data.csv"};
          	\addplot[solid, very thick, orange, no marks] table[skip first n=1,x index=1, y index=2, col sep=comma] {"data/effect_of_hyperparams/purchase100_2a/attack_R_auc_vs_num_models_data.csv"};
       		\addplot[solid, very thick, blue, no marks] table[skip first n=1,x index=1, y index=2, col sep=comma] {"data/effect_of_hyperparams/purchase100_2a/attack_S_auc_vs_num_models_data.csv"};
       \legend{Attack D, Attack R, Attack S}
    \end{axis}
    \end{tikzpicture}
    \else
        \includegraphics[]{fig/\filename-figure\thefigureNumber.pdf}
        \stepcounter{figureNumber}
    \fi
    \end{subfigure}\hfill
    \begin{subfigure}[b]{.45\textwidth}
    \if\compileFigures1
    \begin{tikzpicture}[scale=0.76]
    \begin{axis}[
       xlabel={Number of Points}, 
       ymin = 0.75, ymax = 0.82, yscale=0.8,
       grid = major,
       legend style={at={(1.0, 1.0)},anchor=south east}]
            \addplot[solid, very thick, green!50!black, no marks] table[skip first n=1,x index=1, y index=2, col sep=comma] {"data/effect_of_hyperparams/purchase100_2a/attack_P_auc_vs_num_data_in_class_data.csv"};
       \legend{Attack P}
    \end{axis}
    \end{tikzpicture}
    \else
        \includegraphics[]{fig/\filename-figure\thefigureNumber.pdf}
        \stepcounter{figureNumber}
    \fi
    \end{subfigure}\hspace*{\fill}
    \caption{Effect of computation cost hyperparameters on the attack AUC Scores in Purchase100 II configuration. For Attacks S, R, and D, we vary the number of shadow, reference, and distilled models from 10 to 200. For Attack P we vary the number of population data points (per class) from 100 to 1000. We observe that as more reference or distilled models are used, the AUC scores for Attacks R and D increase rapidly and then stabilize, However, the AUC score for Attack S only increases slightly with the number of shadow models. For Attack P, we similarly observe a slight increase in its AUC score with an increase in number of population data points.}
    \label{fig:purchase100_2a_effect_of_hyperparameters_new}
\end{figure*}

\subsection{Additional Experiment Results for Worst-case Vulnerable Record}

\input{figure_scripts/purchase100_2a_worst_case_neighbouring_loss_histogram_point2.tex}

In Figure~\ref{fig:purchase100_worst_case_neighbouring_loss_histogram} in Section~\ref{ssec:why_vulnerable}, we observe that the latent neighbors of a worst-case vulnerable data record may be highly correlated with the vulnerability of this record. For completeness, in this section, we offer more experiment results for another randomly chosen worst-case vulnerable (RD Correct) record in Figure~\ref{fig:purchase100_worst_case_neighbouring_loss_histogram_point2}.

\subsection{Details about comparison with concurrent work~\cite{carlini2022membership}}

We perform the concurrent attacks in~\citet{carlini2022membership} on our CIFAR10 IV setup, and provide detailed comparison between our attacks and three variants of the concurrent attacks~\cite{carlini2022membership} in Figure~\ref{fig:cifar10_resnet_with_related_work_fpr_vs_tpr_plots}. We observe under 29 shadow models (for constructing the attacks~\cite{carlini2022membership}), 29 reference models (for constructing Attack R), and 29 distilled models (for constructing Attack D), our Attack R and D has higher performance than all three variants of concurrent attack~\cite{carlini2022membership} in terms of AUC score. Attack R also achieves similar TPR at small FPR when compared to the concurrent attacks~\cite{carlini2022membership} in Figure~\ref{fig:cifar10_resnet_with_related_work_fpr_vs_tpr_plots} (b). Moreover, Attack D achieves smaller FPR at fixed high TPR in Figure~\ref{fig:cifar10_resnet_with_related_work_fpr_vs_tpr_plots} (c). Note that differential privacy aim to (roughly speaking) bound the total error (FPR and FNR: 1-TPR). So, both high TPR and low FPR are important for privacy risk auditing. 

\subsection{The effect of Computation Cost on Attack Performance}

Comparing and optimizing the computation cost of different attacks is an important problem. Different attack strategies for privacy auditing have different computational costs. In Appendix A.3., We investigate the relationship between attack computation cost (as measured by hyperparameters such as number of shadow/distilled models trained on and number of population data records used) and performance. How many models do Attacks S, R, and D need to achieve a good attack performance? How many data points does Attack P need for each class in the dataset to perform well? We plot the effect on AUC scores on varying these attack hyperparameters in Figure~\ref{fig:purchase100_2a_effect_of_hyperparameters_new}. We observe that the attack performance improves significantly with an increase in reference or distilled models for Attacks R and D respectively. For Attack S we observe a slight increase in its AUC score with an increase in the number of shadow models. For Attack P, we similarly observe a slight increase in its AUC score with an increase in number of data points per class.

\section{Discussion about Attack Performance under Defense Algorithms}

The effect of defense algorithms on the performance of attacks is an important problem.  As a small example, we experimented on gradient clipping as a heuristic defense in Purchase setup IV. In Table~\ref{tab:purchase100_auc_scores}, we observe that although the performance of our attacks is lower in general after gradient clipping (e.g. when compared to Figure~\ref{fig:purchase100_fpr_vs_tpr_plots_new} without gradient clipping), the attacks (S, P, R and D) derived from our framework still exhibit increasing strength. This trend of increasing attack strength (for Attack S, P, R and D) shows that our methodology for enhancing membership inference attacks is still effective under simple defense such as gradient clipping. We believe this is because we do not make any assumptions about the training algorithms or defense algorithms during derivation of the increasingly strong attacks in Section~\ref{section3}.

\end{document}